\PassOptionsToPackage{sort&compress}{natbib}
\PassOptionsToPackage{capitalise}{cleveref}
\documentclass[11pt]{style/prism-princeton}

\makeatletter
\newcommand{\longdash}[1][2em]{\makebox[#1]{$\m@th\smash-\mkern-7mu\cleaders\hbox{$\mkern-2mu\smash-\mkern-2mu$}\hfill\mkern-7mu\smash-$}}
\makeatother
\newcommand{\omitskip}{\kern-\arraycolsep}

\usepackage{algorithm}
\usepackage{needspace}
\graphicspath{{./}}
\makeatletter
\providecommand*{\theHALG@line}{}
\renewcommand*{\theHALG@line}{\thealgorithm.\arabic{ALG@line}}
\makeatother

\newcommand{\acknowledgments}[1]{\section*{Acknowledgments}#1}

\newcommand{\metric}[2]{\makebox[0pt][c]{{\fontsize{7pt}{7.4pt}\selectfont #1\(\pm\)#2}}}
\newcommand{\bestmetric}[2]{\makebox[0pt][c]{{\fontsize{7pt}{7.4pt}\selectfont\bfseries\boldmath #1\(\pm\)#2}}}
\newcommand{\arenascore}[2]{\makebox[0pt][c]{{\fontsize{7pt}{7.4pt}\selectfont #1\(\pm\)#2}}}
\newcommand{\bestarenascore}[2]{\makebox[0pt][c]{{\fontsize{7pt}{7.4pt}\selectfont\bfseries\boldmath #1\(\pm\)#2}}}

\author[1]{Samuel M. Bateman}
\author[1]{Tenny Yin}
\author[1]{Chongyi Zheng}
\author[1,2]{Lei Huang}
\author[1]{Brian Wang}
\author[1]{Benjamin Eysenbach}
\author[1]{Jaime Fernández Fisac}
\author[1]{Dhruv Shah}

\affiliation[1]{Princeton University}
\affiliation[2]{Columbia University}

\hypersetup{
  pdftitle={Training Controllable World Models via Action Influence Maximization},
  pdfauthor={Samuel M. Bateman, Tenny Yin, Chongyi Zheng, Lei Huang, Brian Wang, Benjamin Eysenbach, Jaime Fernández Fisac, Dhruv Shah},
  pdfkeywords={world models, reinforcement learning, empowerment, active learning}
}

\begin{document}
\title{Training Controllable World Models via Action Influence Maximization}
\abstract{Action-conditioned video generative models trained on successful, goal-directed demonstrations tend to ignore the commanded action, rendering a plausible expert-like future regardless of what is asked of them. We trace this failure to training data that lack sufficient \emph{action influence}. We analyze this metric and prove that it upper bounds the action influence that any model trained on that data can express. Expert demonstrations, whose narrow action distributions let next states be predicted without attending to the action, set a low ceiling that no amount of model capacity alone can raise. Raising it requires collecting different data. We introduce \emph{Action Influence Maximization} (AIM), an algorithm for actively collecting training data for improving dataset action influence. 
We compare the performance and data-efficiency of AIM against exploration heuristics and intrinsic-motivation methods on simulated navigation and manipulation tasks. In pixel-based manipulation, AIM is the only method that consistently improves controllability: 500 actively collected episodes added to 3k demonstrations match a world model trained on 300k expert episodes on end-effector controllability, an $85\times$ reduction in data.
}
\keywords{world models, reinforcement learning, empowerment, active learning}
\website{https://princeton-prism.github.io/action-influence-maximization/}{princeton-prism.github.io/action-influence-maximization}
\code{https://github.com/princeton-prism/action-influence-maximization}{github.com/princeton-prism/action-influence-maximization}
\maketitle

\section{Introduction}

\label{sec:intro}

Action-conditioned world models promise to predict what would happen if a robot took different actions at a given state, enabling policy improvement in novel problems. 
Yet, when queried with out-of-distribution actions, models trained on expert demonstrations can predict expert-like futures or hallucinate implausible alternatives.
This failure of controllability (the ability to faithfully predict how changing actions affect outcomes) limits their usefulness for improving robot policies.

Scaling the quantity and diversity of training data can mitigate this~\citep{yin2026playworldlearningrobotworld, guo2026vlawiterativecoimprovementvisionlanguageaction}, but is limited by the cost of collecting robotics data.
To make world models competitive with on-robot experience, the practical question is how to obtain the most useful world-model training data from each real robot interaction.
Despite this, optimal data collection for robotic world models is relatively underexplored compared to policy improving data~\citep{hejna2025robotdatacurationmutual, ross2011dagger, hu2025rac}.
This paper investigates two key hypotheses: Does the data collection distribution strongly influence the downstream world model controllability and fidelity? And, supposing it consequential, can we optimize the data collection distribution to directly improve world model controllability and fidelity?
This work suggests that the answer to both questions is yes. 

This has a plausible and intuitive explanation: expert demonstrations in high-dimensional action spaces (e.g. human teleoperation) achieve their goals through a narrow sequence of actions.
The more sensitive success is to action variability, the more deterministic expert demonstrations must be.
Successful demonstrations can thus provide insufficient variation in actions to reveal evidence of how alternative actions would change outcomes.
A world model fitting this data can rationally ignore action influence and usually predict expert-like outcomes regardless of action command~\citep{yin2026playworldlearningrobotworld}.
In contrast, a non-expert learner may visit different states and try actions underrepresented in expert demonstration.
Predicting consequences of varying actions in these states are precisely the situations where downstream policy improvement stands to benefit the most and where the world model has the least data support.
We see evidence for this (\cref{fig:scaling-policy-data,fig:fourrooms-pair-dozen-plots}): scaling passive data collection improves controllability for different eval distributions, but the most efficient collection strategy is eval-distribution dependent.

To address this, we propose Action Influence Maximization (AIM), an active data collection algorithm that uses a data-derived intrinsic reward.
Put simply, AIM estimates which variations in action consequences would be poorly explained by the current dataset using a theoretically-motivated, learned intrinsic reward, and then focuses data collection on those state-action pairs as a form of embodied active learning.
To do this, we connect conditional dataset mutual information (MI) to the controllability of any downstream world model, and derive a tractable learned pointwise reward estimator using data density estimation to optimize this property directly.
AIM achieves up to an $85\times$ improvement in data efficiency in a high-dimensional, robot manipulation task: adding just 500 actively collected episodes to 3k expert demonstrations matches the end-effector controllability of a world model trained on 300k expert episodes.

\begin{figure}[tb]
    \centering
    \includegraphics[trim=0bp 0bp 0bp 160bp,clip,width=0.9\textwidth]{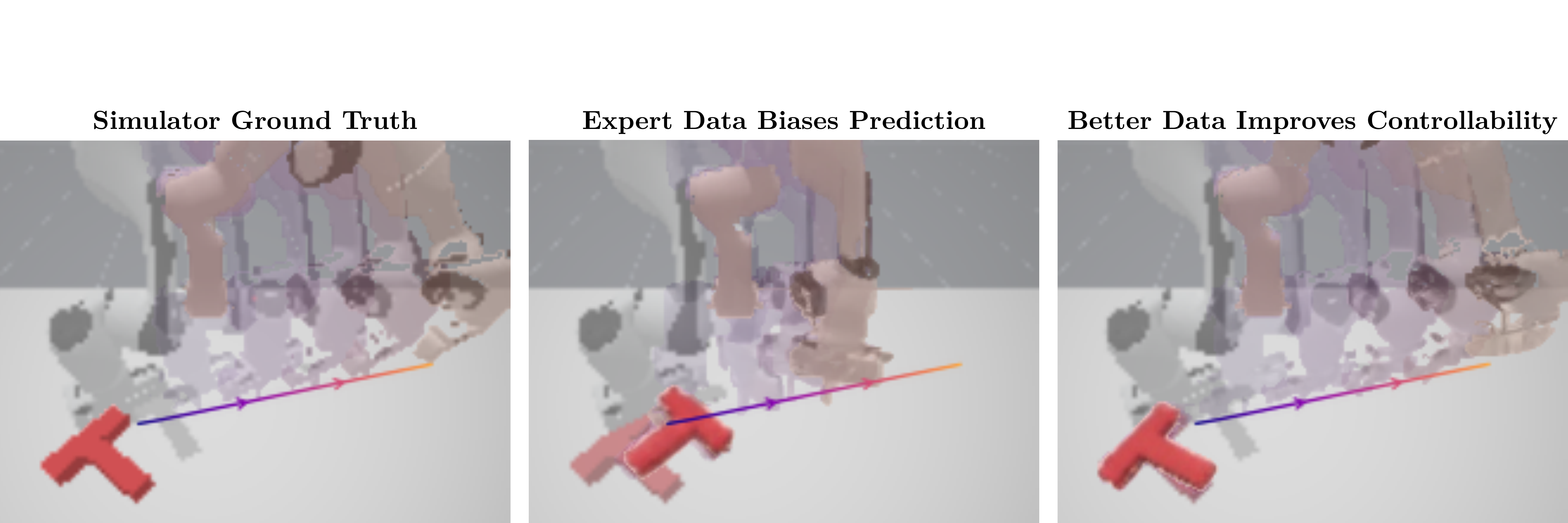}
    \caption{Action-Conditioned Video Generative Models trained on near-expert data suffer from expert success bias and low controllability. The center panel shows a world model trained on expert data ignoring input actions and hallucinating a state closer to success. The right panel shows a world model trained with AIM data which remains faithful to the commanded action.}
    \label{fig:generation-variation-placeholder}
\end{figure}

\begin{figure}[tb]
    \centering
    \includegraphics[trim=0bp 0bp 0bp 85bp,clip,width=0.9\textwidth]{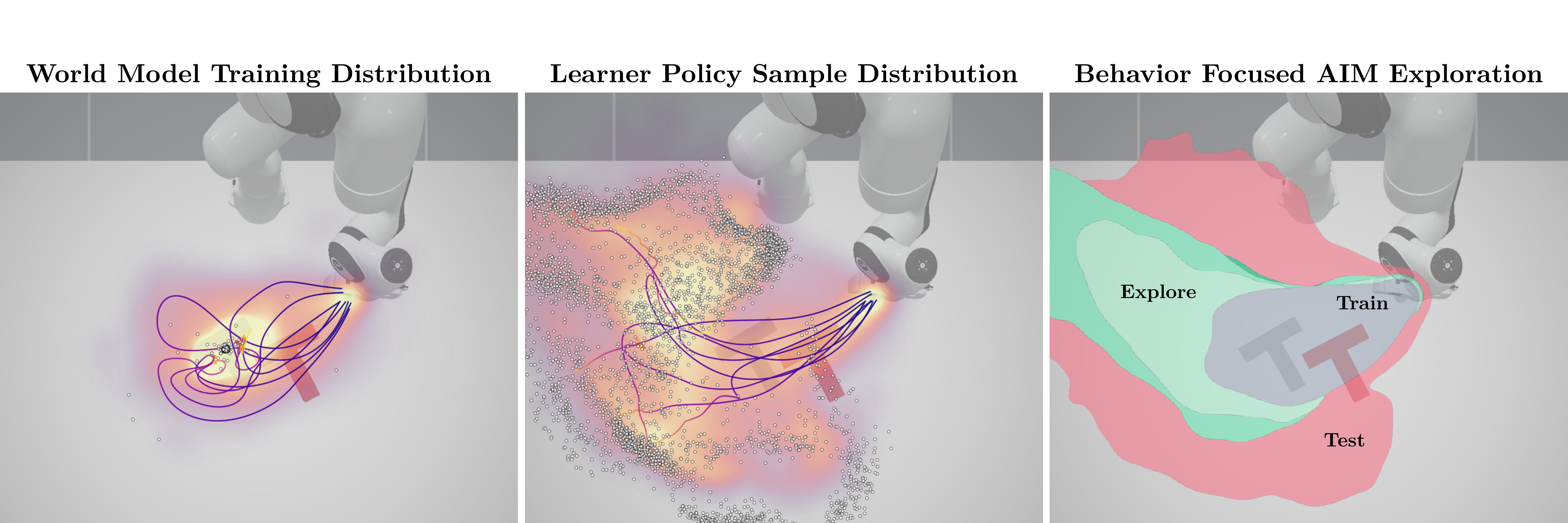}
    \caption{AIM is an algorithm that uses an information maximization objective to collect high-utility experimental transitions to improve controllability of learner policies similar to the prior demonstrations.}
    \vspace{-5pt}
    \label{fig:teaser}
\end{figure}

\section{Related Work}

\label{sec:related-work}

Learning a model to predict the future consequences of actions is a well studied problem in decision-making~\citep{sutton1991dyna, dayan1993successorrepresentation, watter2015embed2ctrl, barreto2017successorfeatures, hafner2019planet, Schrittwieser2019MasteringAG, Hafner2020Dream, Hansen2022tdmpc, wu2022daydreamer}. 
Recent advances in generative modeling~\citep{Ho2020DDPM, song2021DDIM, lipman2023flow, chen2025diffusionforcing, ai2026jointlikelihoodflowmaps, rombach2022stablediffusion, blattmann2023stablevideodiffusion} and self-supervised representation learning~\citep{balestriero2025lejepaprovablescalableselfsupervised, maes2026leworldmodel, wang20261000, zheng2024stabilizing} have significantly improved the scalability of learned predictive models, enabling a new family of action-conditioned generative world models~\citep{hafner2025dreamerv4, guo2026ctrlworld, yang2024unisim, wang2026interactiveworldsimulatorrobot, nvidia2026worldsimulationvideofoundation, gao2026dreamdojogeneralistrobotworld}. 
This family of methods absorbs heterogeneous internet-scale video and robotics datasets, and aims to maximize distributional similarity between generated videos and the dataset~\citep{heusel2017fid,unterthiner2019fvd,sutherland2017generative,ge2024fvdbias}.
In robotics, this goal is misaligned: these models' main utility is improving learner policies~\citep{wu2022daydreamer, guo2026vlawiterativecoimprovementvisionlanguageaction, guo2026ctrlworld, quevedo2026worldgym, wang2026interactiveworldsimulatorrobot, Hansen2022tdmpc, maes2026leworldmodel, zhu2025uwm, ye2026dreamzeroworldactionmodelszeroshot}, where a rollout is only useful if it faithfully reproduces action-induced changes in task-relevant features that are generally unknown at training time.
Our studied world model builds on DreamerV4~\citep{hafner2025dreamerv4} and Diffusion Forcing~\citep{chen2025diffusionforcing}, with architectural and training details described in \Cref{apx:wm-arch}.

A surrogate for task-relevant fidelity is a world model's \emph{controllability}: faithful, action-responsive simulation over long horizons~\citep{guo2026vlawiterativecoimprovementvisionlanguageaction, yin2026playworldlearningrobotworld}. Prior work has improved this fidelity by improving temporal consistency~\citep{chen2025diffusionforcing, song2025historyguided}, incorporating data from autonomous play~\citep{yin2026playworldlearningrobotworld} or failure~\citep{guo2026vlawiterativecoimprovementvisionlanguageaction}, enforcing policy conservatism to avoid problematic extrapolation~\cite{hafner2025dreamerv4}, performing self-consistency based post-training \cite{ye2025reinforcementlearninginverserewards, wang2026evaaligningvideoworld}, or using latent representations that are easier to predict
\citep{maes2026leworldmodel, assran2025vjepa2selfsupervisedvideo, nilaksh2026reconstructionsemanticsmakeslatent}. 
Still, techniques from other domains have seen limited application to modern world models, such as active data collection~\citep{Sharma2020dads, sekar2020plan2explore, hejna2025robotdatacurationmutual, Huang-2025-fail2progress, Hansen2026Hallucination} or sim-and-real co-training~\citep{lei2026mechanistic}.

Exploration has a long history in reinforcement learning (RL)~\citep{bellemare2016unifying, pathak2017curiosity, burda2018exploration, ecoffet2019go, pathak2019self} and MI skill learning~\citep{eysenbach2018diyan, Sharma2020dads, park2024metra, zheng2025can}.
In model-based RL, prediction errors~\citep{stadie2015incentivizing, pathak2017curiosity, burda2018exploration} or ensemble disagreement~\citep{pathak2019self, sekar2020plan2explore} provide an intrinsic reward for collecting data that a particular model struggles to predict.
In contrast, in active learning~\citep{Settles2009activelearninglitreview}, it is sometimes assumed that a small \textit{core set} of data can efficiently summarize a data distribution~\citep{sener2018coreset}.
\citet{toneva2018exampleforgetting}'s ``unforgettable examples'' motivate this perspective: sample learning difficulty and utility is largely consistent across models.
This suggests that data utility is intrinsic to the data, independent of the derived model, and can guide collection~\citep{kendall2017uncertainty, yoo2019lossprediction}.
This is similar to information-theoretic experimental design~\citep{lindley1956bayesianexperimentaldesign, chaloner1995bayesianexperimentaldesignareview} and persistent excitation in adaptive control~\citep{boyd1986necessary, demetriou1993persistence}.
Our work is most closely related to PlayWorld~\citep{yin2026playworldlearningrobotworld}, a method for refining generative world models via on-policy goal-directed ``play data" rollouts, and contemporaneous work by \citet{Hansen2026Hallucination} which uses heuristic measures of prediction instability as a curiosity signal to guide active collection.
In contrast, we propose that the model and its data are largely separable, and optimize the data itself by maximizing intrinsic dataset action influence.

\section{Data Collection with Action Influence Maximization}
\label{sec:aim}
To turn this intuition into a principled collection algorithm, we first quantify world model controllability and connect it to dataset action information. Then, we derive an intrinsic reward to maximize this quantity, and provide a practical batch online RL implementation of the theoretical result.

\subsection{How does data collection shape world model controllability?}
\label{sec:aim-motivation}
\label{sec:story-acquisition-hypothesis}

A general proxy for task-specific simulation fidelity is to maximize the controllability of
the world model: if the world model accurately represents how different
actions influence the next state, and downstream rewards are
primarily evaluated on features which are at least somewhat controllable, 
then maximizing the action controllability of the learned state 
transition should act as a valid surrogate for any task-specific fidelity we choose.
Formally, we define a measure of world model action controllability as the
difference in prediction error when the chosen action is known or not,
dubbed the \emph{explained action information score}:
{
\begin{equation}
\label{eq:story-action-controllability}
    \hat I(\rho_{\mathrm{eval}},\pi_{\mathrm{eval}})
    \mathrel{:=}
    \mathbb{E}_{d_{\mathrm{eval}}(s,a)}
    \left[
        \underbrace{
            D_{\mathrm{KL}}\!\left(
                p(\cdot\mid s,a)\,\middle\|\,
                \hat p_{\theta,\pi_{\mathrm{eval}}}(\cdot\mid s)
            \right)
        }_{\text{prediction error without the action}}
        -\underbrace{
            D_{\mathrm{KL}}\!\left(
                p(\cdot\mid s,a)\,\middle\|\,
                \hat p_\theta(\cdot\mid s,a)
            \right)
        }_{\text{prediction error with the action}}
    \right].
\end{equation}
\par
}

\begin{wrapfigure}{R}{0.28\textwidth}
    \centering
    \captionsetup{font=footnotesize}
    \includegraphics[width=\linewidth]{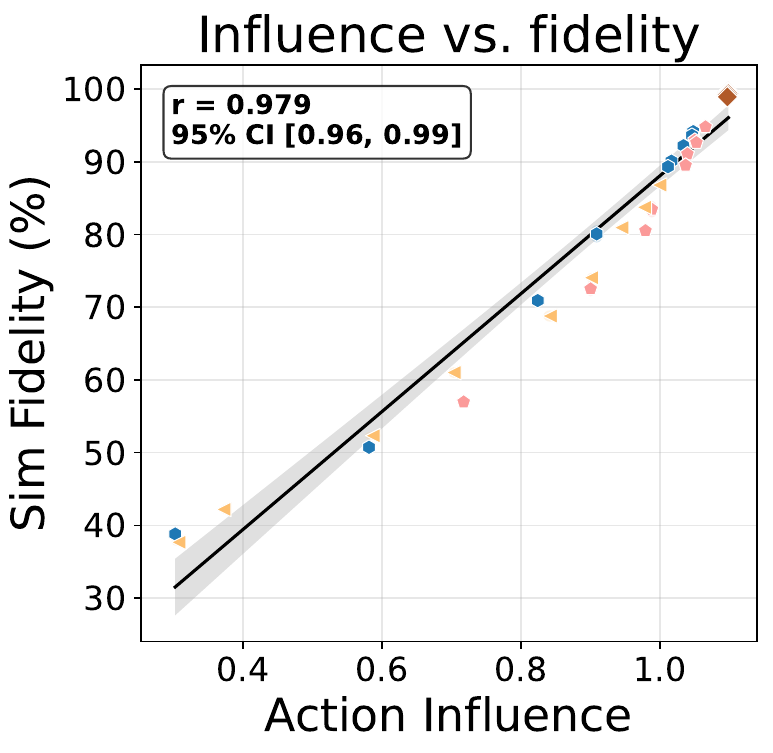}
    \caption{\textbf{Action information and fidelity.} Explained action information correlates with state-extracted simulation fidelity in Four Rooms, across dataset scales (\(r=0.979\)).}
    \label{fig:fourrooms-action-information-fidelity}
\end{wrapfigure}

Without knowledge of the action,
the model averages its predictions over the evaluation policy:
\(\hat{p}_{\theta,\pi_{\mathrm{eval}}}(s'\mid s)
= \int_{\mathcal{A}}
\pi_{\mathrm{eval}}(a\mid s)\,\hat{p}_\theta(s'\mid s,a)\,da\),
where \(s,s'\) are MDP states, \(a\in\mathcal A\) is an action,
\(p\) is the true transition kernel, and \(\hat p_\theta\) its learned
approximation. The evaluation occupancy is
\(d_{\mathrm{eval}}(s,a)=\rho_{\mathrm{eval}}(s)\pi_{\mathrm{eval}}(a\mid s)\),
where \(\pi_{\mathrm{eval}}\) represents the mixture of learner policies
encountered during policy improvement and \(\rho_{\mathrm{eval}}\) their
aggregate state occupancy.
A larger score implies a world model more accurately predicts transitions
when it sees the action than if it was held out, plausibly improving action
responsive task-relevant transition fidelity indirectly.
The observed correlation between explained action information and simulation fidelity supports this interpretation (Figure~\ref{fig:fourrooms-action-information-fidelity}).

Our data scaling experiments in \cref{fig:scaling-policy-data}
indicate that most data collection strategies will saturate in the data limit
long before saturating the simulation fidelity of the downstream task evaluation.
Near-deterministic expert policy data provides few examples of alternate actions
in a state (Figure~\ref{fig:pusht-policy-mode-3x2}), making
\(\hat{p}_{\theta,\pi_{\mathrm{eval}}}(s'\mid s)
\approx\hat{p}_\theta(s'\mid s,a)\)
an equally plausible but degenerate explanation of the data.

If world model fidelity is mostly about how well its training distribution covers
\(d_{\mathrm{eval}}(s,a)\), then expert-oriented data on one
task might be maximally bad for improving a weak learner.
A weak learner definitionally concentrates very little of its state-action occupancy
on highly rewarding states, but a world model trained on expert-like data
will primarily predict plausible expert-like futures with action
insensitivity or blindly extrapolate.
Figure~\ref{fig:fourrooms-pair-dozen-plots} illustrates this action
insensitivity at low data and its improvement with more expert data.
Adding failure data is likely an insufficient remedy: the world model must
accurately model every learner evaluation distribution, from the weakest
policy to the final strongest. Failure data adds a distribution in between,
which may or may not bridge the gap between the learner's distribution and the expert distribution.

\begin{figure}[tb]
    \centering
    \includegraphics[width=0.95\textwidth]{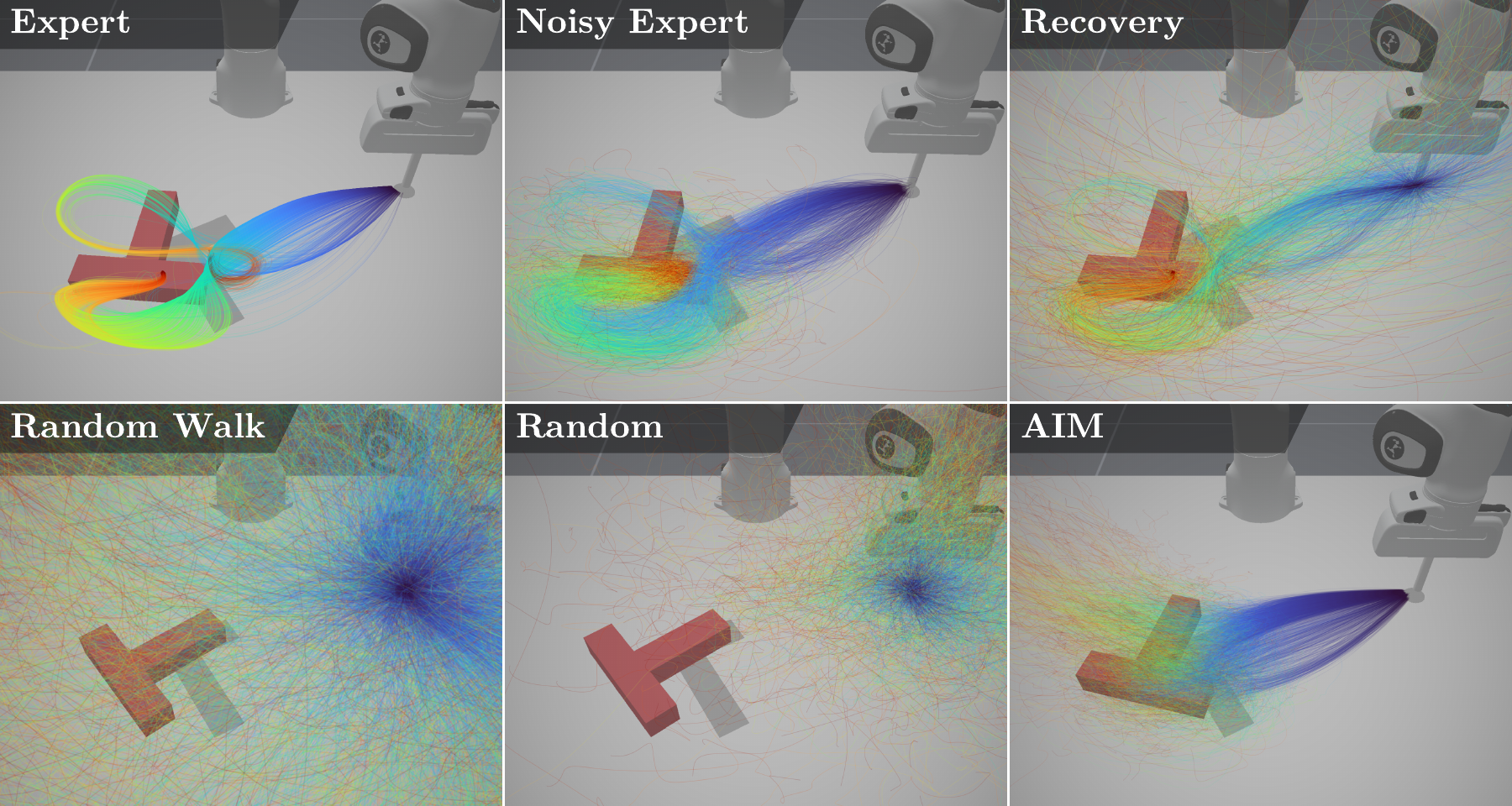}
    \caption{
    \textbf{State-Action Coverage.} Near-expert policies concentrate coverage around successful task execution, while random policies provide broader but less structured coverage. AIM targets action-informative regions useful for improving world model controllability.
    }
    \vspace{-15pt}
    \label{fig:pusht-policy-mode-3x2}
\end{figure}

This results in a sharp, testable hypothesis:
passive data scaling is perhaps only as effective as the data's natural
coverage of evaluation states and the action consequences revealed at those states.
Such incidental coverage can have very poor sample complexity in
high-dimensional search spaces.
In contrast, data-driven active data collection can focus scarce data sampling on
high-utility transitions to maximize coverage of diverse
state-action pairs, avoid duplication of data, and direct exploration
toward ``interesting'' data.
A reward derived from our controllability metric could guide this
exploration in a principled manner.

\subsection{An Objective for Collecting Informative Data}
\label{sec:aim-contextual-objective}

Data plays such a large role in world model capabilities (or lack
thereof) that it seems possible that
an upper bound on a world model's capabilities could be
quantified from the data alone. A key observation
using Bayes' rule and KL nonnegativity allows us to quantify
the intrinsic dataset controllability as conditional mutual
information, which upper-bounds the explained action information
in Equation~\eqref{eq:story-action-controllability}:

\begingroup
\begin{theorem}[Action information bounds model-explained controllability]
\label{thm:story-data-bound}
For a state distribution \(\rho\) and policy \(\pi\), the explained action
information in Equation~\eqref{eq:story-action-controllability} is bounded
by dataset conditional mutual information:
\begin{equation}
\label{eq:story-data-bound}
    \hat I(\rho,\pi)
    \leq I(\rho,\pi)
    \mathrel{:=}
    \mathbb E_{\rho(s)\pi(a\mid s)}
    \left[
        D_{\mathrm{KL}}\!\left(
            p(\cdot\mid s,a)\,\middle\|\,p_\pi(\cdot\mid s)
        \right)
    \right] 
    = I_{\rho\pi p}(A;S'\mid S),
\end{equation}
where \(p_\pi(s'\mid s)=\int_{\mathcal A}\pi(a\mid s)p(s'\mid s,a)\,da\)
is the true transition distribution averaged over actions.
\end{theorem}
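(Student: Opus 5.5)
The plan is to prove the inequality pointwise in \(s\), then integrate against \(\rho\). The idea is to add and subtract the true action-marginal \(p_\pi(\cdot\mid s)\) inside the first KL term of Equation~\eqref{eq:story-action-controllability}. This splits off exactly the conditional mutual information, and what remains is a difference of two model-error terms. Joint convexity of KL then shows that this difference is nonpositive. The ``Bayes' rule and KL nonnegativity'' remark preceding the theorem is essentially this decomposition.

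\textbf{Step 1 (decomposing the first term).} Fix \(s\) and, for brevity, write \(p_a=p(\cdot\mid s,a)\), \(\hat p_a=\hat p_\theta(\cdot\mid s,a)\), \(p_\pi=p_\pi(\cdot\mid s)\) and \(\hat p_\pi=\hat p_{\theta,\pi}(\cdot\mid s)\). The log-ratio identity \(\log\frac{p_a}{\hat p_\pi}=\log\frac{p_a}{p_\pi}+\log\frac{p_\pi}{\hat p_\pi}\) gives
\[
\mathbb E_{\pi(a\mid s)} D_{\mathrm{KL}}(p_a\,\|\,\hat p_\pi)
=\mathbb E_{\pi(a\mid s)} D_{\mathrm{KL}}(p_a\,\|\,p_\pi)
+\mathbb E_{\pi(a\mid s)}\mathbb E_{p_a}\!\left[\log\tfrac{p_\pi}{\hat p_\pi}\right].
\]
The last term does not depend on \(a\) except through the sampling of \(s'\). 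Averaging \(p_a\) over \(\pi(a\mid s)\) yields \(p_\pi\), so the last term equals \(D_{\mathrm{KL}}(p_\pi\,\|\,\hat p_\pi)\). Taking \(\mathbb E_\rho\) of the first term on the right gives \(I(\rho,\pi)=I_{\rho\pi p}(A;S'\mid S)\), by the standard expression of conditional MI as an expected KL from the conditional to the marginal.

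\textbf{Step 2 (reducing to a convexity inequality).} Substituting Step 1 into the definition of \(\hat I(\rho,\pi)\) gives
\[
\hat I(\rho,\pi)=I(\rho,\pi)+\mathbb E_{\rho(s)}\Big[D_{\mathrm{KL}}(p_\pi\,\|\,\hat p_\pi)-\mathbb E_{\pi(a\mid s)}D_{\mathrm{KL}}(p_a\,\|\,\hat p_a)\Big].
\]
It therefore suffices to show that the bracket is \(\le 0\) for every \(s\). This follows from joint convexity of KL, or equivalently the log-sum inequality or the data-processing inequality. The reason is that \(p_\pi=\int\pi(a\mid s)\,p_a\,da\) and \(\hat p_\pi=\int\pi(a\mid s)\,\hat p_a\,da\) are mixtures of the two families with the \emph{same} weights \(\pi(\cdot\mid s)\). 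Here we use that the model marginal is defined with the same policy as the one in the outer expectation, as in the statement. Concretely, take the joint distributions \(\pi(a\mid s)p_a(s')\) and \(\pi(a\mid s)\hat p_a(s')\) over \((a,s')\). Their KL divergence equals \(\mathbb E_\pi D_{\mathrm{KL}}(p_a\|\hat p_a)\), because the \(a\)-marginals coincide. Marginalizing out \(a\) cannot increase KL, and the marginals are exactly \(p_\pi\) and \(\hat p_\pi\).

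\textbf{Main obstacle.} The algebra is routine. The only delicate points are measure-theoretic. We must ensure that the log-ratio splitting in Step 1 is legitimate, meaning the terms are not of the form \(\infty-\infty\). We must also handle the case where \(\mathbb E_\pi D_{\mathrm{KL}}(p_a\|\hat p_a)=\infty\). In that case I would adopt the convention that \(\hat I=-\infty\), or assume finiteness, and then the bound is trivial. Otherwise, absolute continuity \(p_a\ll\hat p_a\) for \(\pi\)-a.e.\ \(a\) implies \(p_\pi\ll\hat p_\pi\) and \(p_a\ll p_\pi\). Hence all densities exist, and the decomposition holds whenever \(I(\rho,\pi)<\infty\). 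If \(I(\rho,\pi)=\infty\), the bound is vacuous.
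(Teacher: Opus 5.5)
Your proof is correct, but it takes a different route from the paper's.

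\textbf{The paper's route.} The paper never splits off $I(\rho,\pi)$. It uses Bayes' rule to rewrite both log-ratios as $\log q_\pi(a\mid s,s')-\log\pi(a\mid s)$ and $\log\hat q_\pi(a\mid s,s')-\log\pi(a\mid s)$. Here $q_\pi$ and $\hat q_\pi$ are the inverse-dynamics posteriors induced by $p$ and $\hat p_\theta$ under the same $\pi$. The $\log\pi$ terms cancel, and the slack comes out in closed form:
\[
I(\rho,\pi)-\hat I(\rho,\pi)=\mathbb E_{\rho(s)p_\pi(s'\mid s)}\bigl[D_{\mathrm{KL}}(q_\pi(\cdot\mid s,s')\,\|\,\hat q_\pi(\cdot\mid s,s'))\bigr].
\]
After that, only KL nonnegativity is needed.

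\textbf{Your route.} You write the slack as $\mathbb E_\rho\bigl[\mathbb E_\pi D_{\mathrm{KL}}(p_a\|\hat p_a)-D_{\mathrm{KL}}(p_\pi\|\hat p_\pi)\bigr]$ and sign it with the data-processing inequality.

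\textbf{Why they agree.} Apply the chain rule for KL to your joints $\pi(a\mid s)p_a(s')$ and $\pi(a\mid s)\hat p_a(s')$, once conditioning on $a$ and once on $s'$. This shows that the excess of the joint KL over the marginal KL is exactly the paper's expected posterior KL. So the two proofs are two sides of the same chain-rule identity; the paper's Bayes-rule rewriting is in effect the proof of the DPI step you cite.

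\textbf{What each buys.} The paper's form gives an explicit tightness condition: equality holds iff $\hat q_\pi=q_\pi$ almost everywhere. The paper reuses this inverse-posterior viewpoint for the Barber--Agakov bound behind the AIM reward. Your form shows transparently why the bound needs $\hat p_{\theta,\pi}$ to be mixed with the same $\pi$ as the evaluation occupancy. It also handles the infinite cases explicitly, which the paper simply assumes away.

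\textbf{One correction.} $p_a\ll\hat p_a$ does not imply $p_a\ll p_\pi$. Take deterministic dynamics $p_a=\delta_{f(s,a)}$ with $f(s,\cdot)$ injective and $\pi(\cdot\mid s)$ diffuse, and let $\hat p_a$ charge the atom at $f(s,a)$. What is true is that $p_a\ll p_\pi$ for $\rho$-a.e.\ $s$ and $\pi$-a.e.\ $a$ follows from $I(\rho,\pi)<\infty$. That is exactly the case in which you use it, so your argument is unaffected.
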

Proof and regularity conditions: Appendix~\ref{app:story-data-bound-proof}.
\par\endgroup

This yields two key observations: world model capability
can be analyzed from properties of the dataset before training a world
model, and, when the bound is tight, world model
controllability can be directly optimized by improving the
return on the corresponding conditional mutual information maximizing
contextual bandits problem
\begingroup
\begin{equation}
\label{eq:story-contextual-optimization}
\begin{aligned}
    \underset{\pi}{\operatorname{maximize}}\quad
    &I(\rho_{\mathrm{eval}},\pi)
    &= \mathbb E_{s\sim\rho_{\mathrm{eval}}}
    \left[
        \mathbb E_{a\sim\pi(\cdot\mid s)}
        \left[
            D_{\mathrm{KL}}\!\left(
                p(\cdot\mid s,a)\,\middle\|\,p_\pi(\cdot\mid s)
            \right)
        \right]
    \right],
\end{aligned}
\end{equation}
\par\endgroup
selecting \(\pi\) to maximize \(I(\rho_{\mathrm{eval}},\pi)\)
on the evaluation distribution, with $\rho_{\mathrm{eval}}$ held fixed.

Since, in general, $\pi_{\mathrm{eval}}$ is not known at training time in the open-ended task policy improvement setting, this optimization problem can be seen as maximizing the diversity of distinguishable action consequences we can achieve from the evaluation state distribution $\rho_{\mathrm{eval}}$.
This is equivalent to maximizing a single-step conditional mutual information between actions and next states given the current state, which is identical to the \textit{one-step empowerment} objective~\citep{klyubin2005empowerment, salge2013empowermentintroduction}.
This provides an interesting insight: without additional inductive biases to enable extrapolation, reliable evaluation of learners requires the dataset to reveal the action consequences encountered under the learner's evaluation distribution.
Scaling passive data collection (e.g. passive video) is unlikely to recover missing action consequences without additional interactive data collection to quantify the effects of an agent's actions on those outcomes.
Equation~\eqref{eq:story-contextual-optimization} therefore provides a framing for improving world model data across downstream tasks.

\subsection{The AIM Reward}
\label{sec:aim-reward}

Equation~\eqref{eq:story-contextual-optimization} motivates an intervention
to test our hypothesis that active data collection can address the problem
of passive incidental coverage by reframing the optimization problem of
Equation~\eqref{eq:story-contextual-optimization} into a tractable active
collection surrogate. Since we are trying to perform a targeted search
for the data we are missing, we need to rely on data we have to
identify those absences.
This data generally has the form of a prior data distribution of observed
transitions, often called a replay buffer,
\(d_k(s,a,s')=d_k(s)\beta_k(a\mid s)p(s'\mid s,a)\), which contains the
previously visited data. Using this, we propose the following data
density-based reward for AIM, combining a
reference state coverage and mutual information term:

\begingroup
\begin{equation}
\label{eq:story-aim-reward}
    r_{\mathrm{AIM}}^k(s,a,s')
    :=\underbrace{\log q_k(a\mid s,s')-\log\beta_k(a\mid s)}_
        {{\text{$r_{\mathrm{MI}}$ Action-Informative Transitions}}}
      +\underbrace{\log\rho_{\mathrm{eval}}(s)-\log d_k(s)}_
        {{\text{$r_{\mathrm{cov}}^k$ Reference State Coverage}}}.
\end{equation}
\par\endgroup
{Here \(q_k(a\mid s,s')\) is the inverse dynamics under
the current replay policies \(\beta_k\).} 
Its utility is intuitive: the reference state coverage term will push the collected
data to be closer to the eval state distribution and away from previously collected data, and
the pointwise mutual information term rewards transitions
whose outcomes make the chosen action more identifiable than under the
existing policy. But it has a deeper connection to the mutual
information contextual bandits problem itself:

\begin{theorem}[AIM improves dataset mutual information]
\label{thm:story-replay-optimizer}
Under suitable conditions outlined in Appendix~\ref{app:story-replay-proof},
the Barber--Agakov lower bound~\citep{barberagakovlowerbound2003} on \(I(\rho,\pi)\) shares the same
optimizer as the replay buffer contextual bandits problem,
given by
\begingroup
\begin{equation}
\label{eq:story-boltzmann-optimizer}
    \pi_k^\star(a\mid s)
    \propto\beta_k(a\mid s)\exp (
        \mathbb E_{s'\sim p(\cdot\mid s,a)}
        [r_{\mathrm{AIM}}^k(s,a,s')]
    ),
\end{equation}
\endgroup
with normalization over actions at each state.
If \(\beta_k\ne\pi_k^\star\) in states with non-zero evaluation probability:
\begingroup
\begin{equation}
\label{eq:story-replay-improvement}
    I(\rho,\pi_k^\star)>I(\rho,\beta_k).
\end{equation}
\endgroup
\end{theorem}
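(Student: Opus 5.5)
The plan is to use the Barber--Agakov (BA) lower bound with the replay inverse dynamics \(q_k\) as the fixed variational decoder, to read the AIM reward as the integrand of that bound relative to \(\beta_k\), and then to do one KL-regularized maximization step. For any policy \(\pi\) and any decoder \(q\), Bayes' rule gives \(p(s'\mid s,a)/p_\pi(s'\mid s)=q_\pi(a\mid s,s')/\pi(a\mid s)\), where \(q_\pi\) is the true inverse dynamics under \(\pi\). Writing \(I(\rho,\pi)=\mathbb E_{\rho\pi p}[\log q_\pi(a\mid s,s')-\log\pi(a\mid s)]\) and using nonnegativity of \(\mathbb E_{\rho\pi p}[D_{\mathrm{KL}}(q_\pi(\cdot\mid s,s')\,\|\,q_k(\cdot\mid s,s'))]\) yields
\begin{equation*}
I(\rho,\pi)\;\ge\;L_k(\pi):=\mathbb E_{\rho(s)\pi(a\mid s)p(s'\mid s,a)}\bigl[\log q_k(a\mid s,s')-\log\pi(a\mid s)\bigr].
\end{equation*}
Equality holds when \(\pi=\beta_k\), because then \(q_\pi=q_k\) and the KL gap vanishes. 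This tightness at \(\beta_k\) is the crux of the improvement claim.

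Next, add and subtract \(\log\beta_k(a\mid s)\):
\begin{equation*}
L_k(\pi)=\mathbb E_{\rho(s)}\Bigl[\mathbb E_{\pi(a\mid s)}\bigl[\bar r_{\mathrm{MI}}(s,a)\bigr]-D_{\mathrm{KL}}\bigl(\pi(\cdot\mid s)\,\|\,\beta_k(\cdot\mid s)\bigr)\Bigr],
\end{equation*}
where \(\bar r_{\mathrm{MI}}(s,a)=\mathbb E_{s'\sim p(\cdot\mid s,a)}[r_{\mathrm{MI}}]\). The coverage term \(r^k_{\mathrm{cov}}\) depends only on \(s\), so adding its expectation to \(\bar r_{\mathrm{MI}}\) shifts the per-state objective by a constant. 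It is also cancelled by the per-state normalization in \eqref{eq:story-boltzmann-optimizer}. Hence, for \(\rho=\rho_{\mathrm{eval}}\), maximizing \(L_k\) is the same as solving, statewise, the KL-regularized contextual bandit with reward \(\mathbb E_{s'}[r^k_{\mathrm{AIM}}]\) and reference \(\beta_k\). I will read this as the ``replay buffer contextual bandits problem'', and adopt this reading explicitly as one of the ``suitable conditions''. The remaining conditions are: absolute continuity \(\pi\ll\beta_k\); \(q_k>0\) wherever \(\beta_k>0\); and integrability of the log-densities, so that the expected reward is finite and the normalizer \(Z_k(s)\) is finite. Under these conditions the Donsker--Varadhan (Gibbs) variational identity gives, for each \(s\),
\begin{equation*}
\mathbb E_\pi[\bar r]-D_{\mathrm{KL}}(\pi\,\|\,\beta_k)=\log Z_k(s)-D_{\mathrm{KL}}(\pi\,\|\,\pi_k^\star).
\end{equation*}
So \(\pi_k^\star\) is the unique maximizer \(\rho\)-a.e., which proves \eqref{eq:story-boltzmann-optimizer}.

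For the strict improvement \eqref{eq:story-replay-improvement}, chain the three facts:
\begin{equation*}
I(\rho,\pi_k^\star)\ge L_k(\pi_k^\star)=L_k(\beta_k)+\mathbb E_\rho\bigl[D_{\mathrm{KL}}(\beta_k\,\|\,\pi_k^\star)\bigr]>L_k(\beta_k)=I(\rho,\beta_k).
\end{equation*}
The equality comes from the Gibbs identity. The strict inequality needs \(\beta_k\ne\pi_k^\star\) on a set of positive \(\rho\)-measure, which is the stated hypothesis.

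I expect the main obstacle to be the bookkeeping rather than any single inequality. First, the coverage term must be justified as irrelevant to the optimizer; it only reweights states, which would matter only if the state distribution were itself optimized. Second, the tightness equality at \(\beta_k\) must be checked carefully when \(\rho\ne d_k\). This is fine because \(q_k\) is the per-\(s\) inverse dynamics of \(\beta_k\), defined for \(\rho\)-a.e. \(s\), so it needs support of \(\rho\) within that of \(d_k\). I would state both points explicitly as regularity conditions.
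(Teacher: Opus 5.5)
Your proposal is correct and follows essentially the same route as the paper. Both use the Barber--Agakov bound with the replay inverse dynamics \(q_k\), which is tight at \(\beta_k\), and rewrite it as a KL-regularized bandit with reference \(\beta_k\) whose unique maximizer is the Boltzmann tilt. Both observe that the coverage term only reweights states, and both chain \(I(\rho,\pi_k^\star)\ge L_k(\pi_k^\star)>L_k(\beta_k)=I(\rho,\beta_k)\).

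The differences are minor. The paper writes the replay problem explicitly as the \(d_k\)-weighted objective \(J_k^d\) and handles the coverage term through the identity \(\rho(s)/d_k(s)=\exp(\log\rho(s)-\log d_k(s))\), assuming \(\rho\) and \(d_k\) are mutually absolutely continuous. You instead argue directly that the coverage term is action-independent. Your Gibbs identity also makes the strict gap explicit as \(\mathbb{E}_\rho[\,D_{\mathrm{KL}}(\beta_k\,\|\,\pi_k^\star)\,]\), where the paper simply appeals to uniqueness of the maximizer.
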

Proof: Appendix~\ref{app:story-replay-proof}.

Thus optimizing the AIM reward in the contextual bandits setting
iteratively increases dataset mutual information, and thus an upper bound
on world model controllability.

Because the update reweights the prior replay data, expert demonstrations
bias initial collection toward informative variations
around expert behaviors in regions of interest. Optimizing this iteratively
can be seen as a guided search for action consequences:
it first explores variations around what humans have
revealed is interesting, then spreads outward as those
consequences become represented in the replay buffer and the re-estimated
reward favors alternatives that remain underrepresented. This provides a
way to progressively bridge the expert distribution to varying degrees
of degraded policies, the exact qualities of data we desired for accurate policy
evaluation.

\subsection{Estimating and Optimizing the AIM Reward in Practice}
\label{sec:implementation-details}

Directly optimizing the AIM contextual bandits problem of Theorem~\ref{thm:story-replay-optimizer} is not practical: it requires the ability to sample from the evaluation distribution \(\rho_{\mathrm{eval}}(s)\) arbitrarily and sample a transition from the true dynamics \(p(s'\mid s,a)\) for each state-action pair. 
Given the structure of equation~\eqref{eq:story-boltzmann-optimizer}, we can instead estimate the AIM reward function from the replay buffer and optimize a conservative sequential RL problem as a surrogate for the contextual bandits problem.
Estimating the AIM reward is challenging for three reasons: it's non-stationary as further data is collected, requires sufficiently accurate likelihood estimation of $q_{\beta_k}(a|s,s')$, $\beta(a|s)$, and $d_{k}(s)$, and the evaluation distribution $\rho_{\mathrm{eval}}(s)$ is not generally known at training.

To address the non-stationarity, we estimate the learned AIM reward function in a batch online RL loop which collects a batch of experimental data, updates learned density reward models to account for the new trajectories, and performs conservative RL to extract an AIM reward maximizing policy solution similar to equation~\eqref{eq:story-boltzmann-optimizer}.
This is described in detail in Algorithm~\ref{alg:aim-collection-implementation}.
This approach approximates the learned densities as piecewise constant in collection time to allow for amortizing the training of larger learned flow matching reward models~\citep{lipman2023flow,Chi2023DiffusionPolicy}.
This can be thought of abstractly as linearizing around the current dataset (refitting the density reward models), taking a conservative step along an improvement direction under this linearization (data collection with the reward maximizing policy), and then relinearizing at the new replay buffer dataset (refitting again).
Periodic re-estimation ensures the learned reward remains aligned with dataset utility as the data distribution evolves during collection.
We approximate the policy improvement in \Cref{eq:story-boltzmann-optimizer} by performing per-iteration value estimation via offline RL through an Implicit Q-Learning Critic \cite{kostrikov2022iql} and use AWR \cite{peng2019awr} for policy extraction into a diffusion policy actor~\citep{Chi2023DiffusionPolicy}.

For the reward estimation itself, this type of variational mutual information maximization has deep roots in pre-existing theoretical literature \cite{barberagakovlowerbound2003}, but practical estimation of density likelihoods over high dimensional data has been historically challenging.
We make a number of principled decisions to inherit the strengths of modern generative modeling to our task.
We estimate all likelihoods with a flow matching model \cite{lipman2023flow} with the Continuous Normalizing Flows FFJORD-Hutchinson estimator \cite{grathwohl2018ffjord, Hutchinson1989estimator} and amortize the divergence estimation cost through training time self-distillation \cite{ai2026jointlikelihoodflowmaps}.
This has a nice property of averaging single point Hutchinson sample estimates in the weights throughout training, making the likelihood estimation more reliable.
While this allows for exact likelihood and thus reward estimation, the practical differences in dimensionality between the state and action likelihoods creates numerical problems with balancing and training an RL critic on the AIM reward function. 
We address this by statistical standardization of the mutual information and reference state coverage terms, and clip AWR weights.
This reduces the sharpness of rare tail support events and improves effective sample size in AWR, while still giving utility-driven rewards.
Theoretically this can be interpreted as adjusting the sharpness of the weighting of the reference state coverage term, and as a trust region on the mutual information reward.

Finally, in absence of a specific desired evaluation distribution, we make $\rho_{\mathrm{eval}}(s)$ a uniform distribution over the target state space.
We find this simplification works reasonably well, but future work could explore more explicitly designing this distribution to the target learner distribution to further improve sample efficiency.

\section{Evaluation}
\label{sec:experiments-and-results}
\label{sec:story-scaling-experiment}

Section~\ref{sec:aim} proposes a compelling theoretical framework for analyzing world model controllability and its relation to data: the dataset mutual information (or lack of it) determines the environment faithful controllability exposed to a world model and largely determines its useful fidelity for policy learners.
To analyze this framework empirically, we study how the collection distribution affects world model fidelity, whether AIM improves data efficiency, and which parts of AIM contribute to that improvement.
We hold architecture and training details fixed within each of the following environments:

\textbf{Four Rooms:}
Four Rooms~\citep{nikulin2024xlandminigrid, MinigridMiniworld23} provides a fully observable, deterministic environment where we can exhaustively evaluate every state-action transition.
We train identical flow-matching world models~\citep{lipman2023flow, peebles2023dit} from scratch on several expert policy datasets: uniform random actions, sim-privileged Oracle PPO~\cite{schulman2017proximalpolicyoptimizationalgorithms}, partially observable PPO, and Early Stopped PPO.
We evaluate simulator fidelity by querying the learned model at all state-action pairs for an environment and matching predicted observations to all sim states via maximum likelihood.
This setting isolates the role of the transition distribution from partial observability, data contamination from large scale training, and bias towards any specific evaluation policy distribution.

\textbf{Push-T:}
Push-T tests whether the same data-collection principles apply in high dimensional, pixel-based robot manipulation.
We train a DreamerV4-Like latent Diffusion Forcing model~\citep{hafner2025dreamerv4,chen2025diffusionforcing,rombach2022stablediffusion,lipman2023flow,Kingma2013AutoEncodingVB} from scratch and compare datasets collected by expert, random, random-walk, synthetic RaC-like recovery~\citep{hu2025rac}, intrinsic motivation~\citep{eysenbach2018diyan, Sharma2020dads, park2024metra, pathak2017curiosity}, and AIM policies.
Exhaustive transition evaluation is infeasible in high-dimensional manipulation; we evaluate evenly sampled autoregressive horizons (H=10) on two ``learner'' target policy distributions: rollouts from a weak on-policy Behavior-Cloned (BC) Diffusion policy~\citep{Chi2023DiffusionPolicy} and an ``Anti-Push-T'' Push-Away policy which is trained to disperse the Tee to unique positions away from the normal goal.
Our controllability metric is end-effector masked pixel recall to isolate action-dependent controllability. Additional metrics, details on the data generating policies, and world model architectures are reported in the Appendix.

\subsection{Mutual Information and Controllability}
\label{sec:scaling-controllability-results}

First we briefly analyze the presumption that the dataset mutual information limits the model controllability.
We train 68 world models, 8 data scales in Push-T and 9 in Four Rooms, with 4 passive data collection strategies for each.
\Cref{fig:fourrooms-action-information-fidelity} summarizes the results from Four Rooms: across data scaling, the primary predictor of simulator fidelity is the world models explained action influence ($r=0.979$).
Push-T shows similar scaling behavior (see \Cref{fig:scaling-policy-data}), though in the data limit, different data collection policies converge to different levels of controllability.
This is compatible with our model: different fixed collection distributions will leave different portions of the state space systematically under sampled regardless of the data's utility.
Even random behavior becomes less effective as the dimensionality of the state space grows and the initial state distribution becomes less uniform (see \Cref{app:fourrooms-fixed-start-coverage}).
This motivates active data collection both empirically and theoretically.

\subsection{Active Data Collection with AIM}
\label{sec:results}

Now that we have established that passive data collection has diminishing returns beyond a certain point in modeling high dimensional control, we can evaluate another prediction of our theory: maximizing action information through active collection should provide a substantial boost in data efficiency for world model improvement.
We implement the approximate AIM sequential collection algorithm from Section~\ref{sec:implementation-details}, and collect 100-episode batches over four rounds in Four Rooms and five in Push-T to augment a base expert collection distribution of \(\sim 3\mathrm{K}\) expert demonstrations.
We compare this at equal rollout budget with both the active and passive data collection policies described in Section~\ref{sec:story-scaling-experiment}, training a total of 73 world models, and report our results in \Cref{fig:main-aim-results}.

\begin{figure*}[tb]
    \centering
\includegraphics[width=0.95\textwidth]{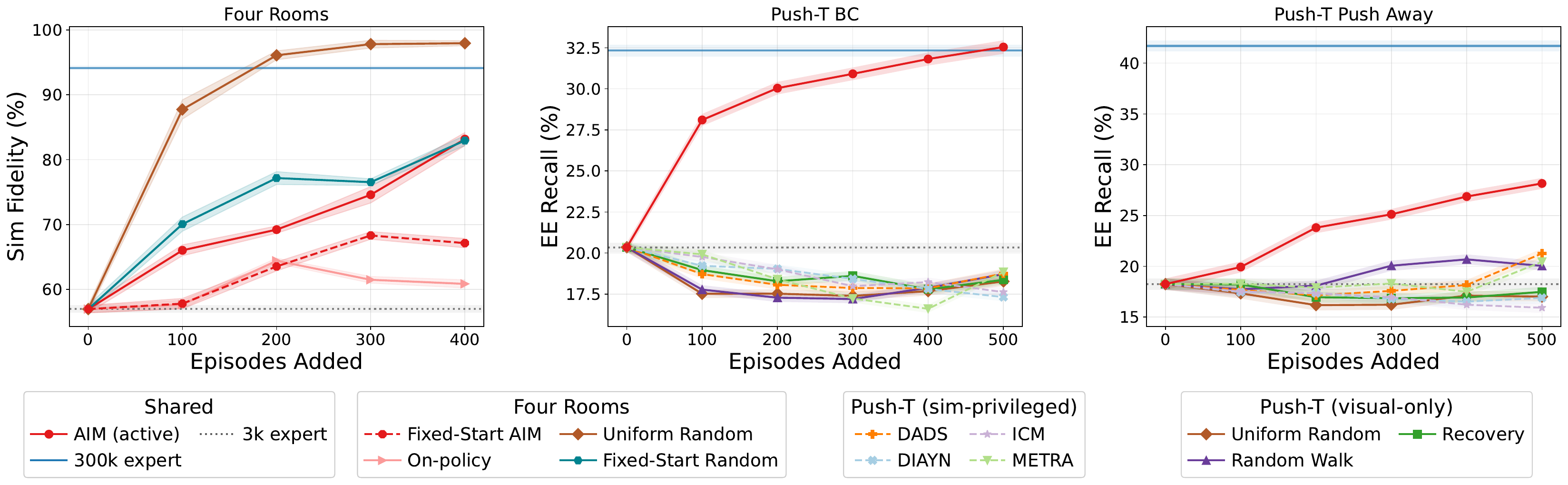}

    \caption{
    \textbf{Improving Controllability via Active Data Collection.} Across discrete-action navigation (FourRooms: left) and continuous-action manipulation (Push-T: center, right), AIM improves controllability with just a few hundred episodes. While \texttt{Random} is ideal when uniform state sampling is possible (FourRooms), AIM approaches this performance with more data; in more realistic scenarios (Push-T), AIM is the only method that improves controllability. Most notably, AIM with just 500 new episodes matches the performance of a world model trained with 300K expert episodes.
    }
    \vspace{-15pt}
    \label{fig:main-aim-results}
\end{figure*}

In Four Rooms, uniform random is predictably strongest, consistent with \Cref{fig:scaling-policy-data} and our prior analysis.
However, fixed-start Random performs much worse, showing that random's advantage comes largely from its initial state coverage rather than action entropy alone.
In Push-T, random, recovery, and intrinsic motivation baselines plateau or degrade on end-effector controllability, while AIM improves consistently between collections.
With 3k expert trajectories plus 500 AIM collections, the model matches the 300k-expert baseline on BC rollouts controllability, and approaches the 19k-expert baseline on Anti-Push-T rollouts.
The flat baselines do not indicate failed exploration.
These policies collect meaningful data and perform competitively on other fidelity metrics (see \Cref{fig:aim-bc-pusht-extended-results,fig:aim-anti-pusht-extended-results}) and at larger data scales, but their additional rollouts appear to transfer poorly to the target controllability metrics in the low-data regime.

Our theory suggests why broader data coverage need not improve downstream world-model utility, especially in the low data regime: alternative collection policies may add trajectories far from the expert distribution, creating ``islands'' of action consequences that the model struggles to connect to existing data, leaving intermediate evaluation behaviors poorly supported.
AIM can instead be thought of as ``growing'' a successively broader region of action explainability around existing data.
For policies which vary over learning, or alternate tasks with behavior similar to the prior data, this data ``bridge'' may provide better support than broader but disconnected exploration.
Two questions follow naturally: which parts of AIM contribute to this improvement, and how well does it generalize across conditions?

\subsection{Ablations}
\label{sec:aim-ablations}

\begin{table*}[!b]
  \centering
  \resizebox{\linewidth}{!}{\begin{minipage}{43pc}
  \begingroup
  \fontencoding{OT1}\fontfamily{ptm}\selectfont
  \renewcommand{\footnotesize}{\fontsize{8pt}{9pt}\selectfont}
  \renewcommand{\scriptsize}{\fontsize{7pt}{8pt}\selectfont}
  \setlength{\parskip}{0pt}
  \setlength{\tabcolsep}{0.35pt}
  \renewcommand{\arraystretch}{1.05}
  \footnotesize
  \begin{tabularx}{\textwidth}{@{}
    >{\fontsize{7pt}{7.4pt}\selectfont\raggedright\arraybackslash}p{0.14\textwidth}
    *{3}{>{\centering\arraybackslash}X}
    >{\centering\arraybackslash}p{0.065\textwidth}
    *{3}{>{\centering\arraybackslash}X}
    >{\centering\arraybackslash}p{0.065\textwidth}
    *{3}{>{\centering\arraybackslash}X}
    >{\centering\arraybackslash}p{0.065\textwidth}@{}}
    \hline
    \multirow{2}{=}{\centering Experiment} &
    \multicolumn{3}{c}{\scriptsize BC Recall (H=33, 95\% CI)} &
    \multirow{2}{=}{\centering\scriptsize Preference ELO} &
    \multicolumn{3}{c}{\scriptsize BC Failure Recall (H=33, 95\% CI)} &
    \multirow{2}{=}{\centering\scriptsize Preference ELO} &
    \multicolumn{3}{c}{\scriptsize Push Away Recall (H=33, 95\% CI)} &
    \multirow{2}{=}{\centering\scriptsize Preference ELO} \\
    \cline{2-4}\cline{6-8}\cline{10-12}
    & \raisebox{-0.12em}[0pt][0pt]{EE} & \raisebox{-0.12em}[0pt][0pt]{TEE} & \raisebox{-0.12em}[0pt][0pt]{Joint} & &
    \raisebox{-0.12em}[0pt][0pt]{EE} & \raisebox{-0.12em}[0pt][0pt]{TEE} & \raisebox{-0.12em}[0pt][0pt]{Joint} & &
    \raisebox{-0.12em}[0pt][0pt]{EE} & \raisebox{-0.12em}[0pt][0pt]{TEE} & \raisebox{-0.12em}[0pt][0pt]{Joint} & \\
    \hline
    PPO-300K & \metric{.275}{.013} & \metric{.400}{.024} & \metric{.224}{.012} & \arenascore{843}{155} &
    \metric{.554}{.019} & \metric{.491}{.028} & \metric{.432}{.026} & \arenascore{1040}{185} &
    \metric{.332}{.023} & \metric{.282}{.026} & \metric{.198}{.018} & \arenascore{816}{128} \\
    PPO-3K & \metric{.267}{.011} & \metric{.322}{.022} & \metric{.203}{.011} & \arenascore{732}{155} &
    \metric{.517}{.020} & \metric{.424}{.030} & \metric{.378}{.028} & \arenascore{824}{171} &
    \metric{.297}{.023} & \metric{.190}{.017} & \metric{.147}{.015} & \arenascore{889}{132} \\
    PPO-3K+AIM500 & \bestmetric{.485}{.014} & \bestmetric{.403}{.023} & \bestmetric{.344}{.018} & \bestarenascore{1190}{167} &
    \bestmetric{.564}{.018} & \bestmetric{.454}{.031} & \bestmetric{.411}{.029} & \bestarenascore{1049}{185} &
    \bestmetric{.375}{.016} & \metric{.201}{.020} & \bestmetric{.165}{.016} & \bestarenascore{1029}{134} \\
    PPO-3K+Recovery500 & \metric{.272}{.012} & \metric{.378}{.026} & \metric{.215}{.012} & \arenascore{979}{155} &
    \metric{.545}{.022} & \bestmetric{.455}{.030} & \metric{.406}{.027} & \arenascore{989}{200} &
    \metric{.314}{.023} & \metric{.216}{.020} & \bestmetric{.169}{.016} & \arenascore{735}{128} \\
    PPO-3K+ICM500 & \metric{.222}{.011} & \metric{.393}{.025} & \metric{.186}{.011} & \arenascore{868}{145} &
    \metric{.506}{.022} & \metric{.441}{.029} & \metric{.383}{.027} & \arenascore{934}{186} &
    \metric{.267}{.020} & \bestmetric{.242}{.023} & \metric{.158}{.016} & \arenascore{924}{141} \\
    PPO-3K+DIAYN500 & \metric{.233}{.009} & \metric{.364}{.027} & \metric{.186}{.010} & \arenascore{693}{145} &
    \metric{.522}{.021} & \metric{.438}{.031} & \metric{.388}{.029} & \arenascore{976}{167} &
    \metric{.310}{.024} & \metric{.186}{.018} & \metric{.143}{.015} & \arenascore{741}{129} \\
    \hline
    Recovery-3K & \metric{.315}{.013} & \bestmetric{.422}{.026} & \metric{.266}{.014} & \arenascore{1059}{158} &
    \metric{.598}{.018} & \bestmetric{.512}{.029} & \bestmetric{.466}{.026} & \arenascore{1099}{202} &
    \metric{.396}{.018} & \bestmetric{.276}{.026} & \bestmetric{.226}{.018} & \arenascore{1250}{147} \\
    Recovery-3K+AIM500 & \bestmetric{.511}{.014} & \metric{.417}{.024} & \bestmetric{.361}{.019} & \bestarenascore{1412}{183} &
    \bestmetric{.613}{.016} & \metric{.498}{.028} & \metric{.459}{.026} & \bestarenascore{1147}{212} &
    \bestmetric{.421}{.019} & \metric{.264}{.024} & \metric{.223}{.018} & \bestarenascore{1317}{178} \\
    \hline
    AIM w/o re-estimation & \bestmetric{.526}{.013} & \bestmetric{.428}{.023} & \bestmetric{.379}{.019} & \bestarenascore{1388}{174} &
    \metric{.552}{.020} & \bestmetric{.464}{.031} & \bestmetric{.416}{.029} & \arenascore{1047}{169} &
    \metric{.357}{.017} & \metric{.226}{.022} & \metric{.183}{.018} & \bestarenascore{1092}{138} \\
    AIM w/o MI reward & \metric{.481}{.013} & \metric{.405}{.024} & \metric{.344}{.019} & \arenascore{1107}{208} &
    \bestmetric{.566}{.017} & \metric{.458}{.030} & \bestmetric{.416}{.028} & \arenascore{964}{151} &
    \bestmetric{.389}{.016} & \bestmetric{.231}{.023} & \bestmetric{.192}{.019} & \arenascore{1076}{207} \\
    AIM w/o Novelty reward & \metric{.483}{.014} & \metric{.401}{.025} & \metric{.340}{.019} & \arenascore{1376}{295} &
    \metric{.559}{.018} & \metric{.457}{.030} & \metric{.411}{.028} & \bestarenascore{1128}{191} &
    \metric{.377}{.016} & \metric{.224}{.023} & \metric{.182}{.019} & \arenascore{1048}{184} \\
    \hline
  \end{tabularx}
  \endgroup
  \end{minipage}}
  \caption{\textbf{Push-T collection comparisons and AIM ablations.}
  The row groups compare expert data scale and 500-episode additions to PPO-3K (top), reuse of the same AIM500 with Recovery-3K (middle), and AIM component ablations (bottom).
  BC-Failure interleaves BC and Push-Away behavior to introduce anomalous contacts.
  Joint recall is the per-transition minimum of end-effector and Tee recall, averaged over evaluation transitions. Higher recall and preference scores are better.
  }
  \label{tab:pusht-ablations}
\end{table*}

To test the robustness of our conclusions, we evaluate higher-resolution Push-T world models over a longer horizon (\(H=33\)), with end-effector, Tee, and joint \(\min\{\mathrm{EE},\mathrm{Tee}\}\) recall, and arena-style A/B human preference ratings~\citep{chiang2024chatbot} (1545 comparisons, 20 annotators).
We also add a BC-Failure test set that interleaves BC and Push Away policies as a proxy for suboptimal behavior during policy improvement.
Table~\ref{tab:pusht-ablations} summarizes the results.
Adding 500 AIM episodes improves BC joint recall by 69\% over PPO-3K and surpasses the 300K-expert baseline by 54\%,
and we find human preference is largely consistent with our recall
metrics (BC Arena Elo increases from 732 to 1190).

To test if AIM is exploiting some characteristic of the low variability of PPO-3K for performance gains, we ablate swapping the PPO-3K base data with Recovery-3K, while keeping the same AIM data collected with the PPO-3K reward model, improving BC joint recall by 36\% over Recovery-3K.
This suggests AIM targets action information not easily acquired through common passive approaches via play-like structured behavior.
We also ablate reward re-estimation by collecting 500 trajectories with the first learned reward model, improving BC joint recall by 10\% over iterative AIM.
This suggests that the initial reward model remains valid for longer than our analysis in Section~\ref{sec:implementation-details} predicted.
When we ablate the AIM reward terms independently, we see the novelty reward outperforming on the very off-policy Push Away task (16\% higher joint recall than AIM) where state coverage makes the most difference.
Meanwhile, MI-only reward scores a bit higher in human evaluation on BC and BC-Failure evaluation sets (+16\%/+8\% over AIM), collecting more discriminative action consequences near the expert distribution.
Together these results support the hypothesis that small data interventions can provide significant improvements in world model controllability and fidelity, while leaving ample room to further optimize this type of active collection to improve sample efficiency in robotics.

\section{Conclusion and Limitations}

\label{sec:conclusion}

In this work, we propose a new theoretical perspective relating the quality of an action conditioned world model to quality of its data: a world model is primarily limited by the action consequences explained by its data, not the scale of its data.
We then show we can directly optimize the coverage of action consequences using Action Influence Maximization (AIM), a theoretically motivated batch online reinforcement learning algorithm that directs a data collection policy to probe the environment in ways which are expected to have predictable consequences but are unexplained by the existing dataset.
AIM shows that data-focused active exploration in high-dimensional, pixel-based robotic environments is not only feasible, but results in as much as \(85\times\) data scaling efficiency compared to passive expert collection approaches.
The key insight is simple: scaling helps mainly if it scales action information, not simply the number of transitions.
This suggests a path toward data-efficient, intrinsic-motivation-driven continual improvement of generalist world models and embodied policies, where robots actively collect the data they need to improve their own capabilities.

\noindent \textbf{Limitations.} A key limitation of AIM is the implementation complexity and compute intensity of retraining multiple learned density reward models.
The no-reestimation ablation suggests fewer updates could suffice, and the learned reward could also be used offline to filter datasets~\citep{sener2018coreset}.
Simpler, more efficient implementation will be key in scaling AIM-like methods to pretraining scale.
Additionally, our sequential RL approximation is not guaranteed to achieve improvement in the AIM contextual bandits problem, suggesting further theoretical work may be needed to bridge this limitation in more complex settings where optimizing RL is more challenging.
Our analysis also leaves the effects of post-training on improving data efficiency unaddressed, which may allow for inference of action consequences absent from interaction data.
Our work represents a proof of concept of the power of active data collection for robotics in simulation, and transferring this to continual improvement of generalist models remains to be shown by future work.

\clearpage
\acknowledgments{
This research was supported in part by Microsoft Research and Toyota Research Institute (TRI); this article solely reflects the opinions and conclusions of its authors, and not TRI nor any other Toyota entity. We thank the Google TPU Research Cloud (TRC) Program, NVIDIA Academic Grant program, and Gemini Academic Program for their generous support of our research.
Samuel M. Bateman is supported by the National Science Foundation CISE Graduate Fellowships under
Grant No. 2313998. Any opinions, findings, and conclusions or recommendations expressed in this
material are those of the author(s) and do not necessarily reflect the views of the NSF.
The authors would like to thank Antonio Crivello, Duy Nguyen, Jeremy Bao, and Mingtong Zhang for providing invaluable feedback on the manuscript and presentation. 
Additionally, we would like to thank our amazing friends and colleagues for volunteering their time to help annotate world model rollout preferences.

}

\clearpage
\beginappendix{
\numberwithin{equation}{section}
\begingroup
\section{Proofs}
\label{app:story-proofs}

\subsection{Proof of Theorem~\ref{thm:story-data-bound}}
\label{app:story-data-bound-proof}
\par\endgroup

\begin{proof}[Proof]
\begingroup
Assume the KL prediction errors in
Equation~\eqref{eq:story-action-controllability} have finite expectations,
and the relevant log ratios are well defined with finite expectations.
Expanding the KL divergences in Equation~\eqref{eq:story-action-controllability}, we have
\begin{align}
    \hat I(\rho,\pi)
    &=\mathbb E_{\rho(s)\pi(a\mid s)p(s'\mid s,a)}
      \left[
        \log\frac{p(s'\mid s,a)}{\hat p_{\theta,\pi}(s'\mid s)}
        -\log\frac{p(s'\mid s,a)}{\hat p_\theta(s'\mid s,a)}
      \right]\notag\\
    &=\mathbb E_{\rho(s)\pi(a\mid s)p(s'\mid s,a)}
      [\log\hat p_\theta(s'\mid s,a)-\log\hat p_{\theta,\pi}(s'\mid s)].
\end{align}
\par\endgroup

We can then write the mutual information of the ground-truth transitions
over a given state marginal \(\rho\) as
\begin{equation}
    I(\rho,\pi)
    =\mathbb E_{\rho(s)\pi(a\mid s)p(s'\mid s,a)}
    [\log p(s'\mid s,a)-\log p_\pi(s'\mid s)].
\end{equation}
Using the variational posteriors,
\begin{equation}
\label{eq:story-app-variational-posterior}
\begin{aligned}
    q_\pi(a\mid s,s')
    &\triangleq\frac{p(s'\mid s,a)\pi(a\mid s)}{p_\pi(s'\mid s)},\\
    \hat q_\pi(a\mid s,s')
    &\triangleq\frac{\hat p_\theta(s'\mid s,a)\pi(a\mid s)}
                         {\hat p_{\theta,\pi}(s'\mid s)},
\end{aligned}
\end{equation}
we can write the difference between \(I(\rho,\pi)\) and
\(\hat I(\rho,\pi)\) as:
\begin{equation}
\begin{aligned}
    I(\rho,\pi)-\hat I(\rho,\pi)
    &=\mathbb E_{\rho(s)\pi(a\mid s)p(s'\mid s,a)}
    \left[\begin{aligned}
        &\log p(s'\mid s,a)-\log p_\pi(s'\mid s)\\
        &{}-\log\hat p_\theta(s'\mid s,a)+\log\hat p_{\theta,\pi}(s'\mid s)
    \end{aligned}\right].
\end{aligned}
\end{equation}
Applying Bayes rule to both expressions, and plugging in the identity
\(\rho(s)\pi(a\mid s)p(s'\mid s,a)
=\rho(s)p_\pi(s'\mid s)q_\pi(a\mid s,s')\), we have
\begin{align}
    I(\rho,\pi)-\hat I(\rho,\pi)
    &=\mathbb E_{\rho(s)p_\pi(s'\mid s)q_\pi(a\mid s,s')}
    \left[\begin{aligned}
        &\log q_\pi(a\mid s,s')-\log\pi(a\mid s)\\
        &{}-\log\hat q_\pi(a\mid s,s')+\log\pi(a\mid s)
    \end{aligned}\right]\notag\\
    &=\mathbb E_{\rho(s)p_\pi(s'\mid s)q_\pi(a\mid s,s')}
      [\log q_\pi(a\mid s,s')-\log\hat q_\pi(a\mid s,s')]\notag\\
    &=\mathbb E_{\rho(s)p_\pi(s'\mid s)}
      \left[\int_{\mathcal A}q_\pi(a\mid s,s')
      \log\frac{q_\pi(a\mid s,s')}{\hat q_\pi(a\mid s,s')}\,da\right]\notag\\
    &=\mathbb E_{\rho(s)p_\pi(s'\mid s)}
      \left[D_{\mathrm{KL}}\bigl(q_\pi(\cdot\mid s,s')
      \|\hat q_\pi(\cdot\mid s,s')\bigr)\right]\\
    &\geq 0,\notag
\end{align}
where the last inequality holds because of the non-negative property of
KL divergence~\citep{murphy2023probabilistic}.
This implies that the explained action information is upper bounded by the
true data MI \(\hat I(\rho,\pi)\leq I(\rho,\pi)\),
and dataset action influence limits world model explainable controllability
over the dataset.
\end{proof}

\begingroup
\subsection{Improving Replay Buffer Baseline Lower Bound Improves Action Information}
\label{app:story-aim-proof}
\begin{lemma}[{Improving Replay Buffer Baseline Lower Bound Improves Action Information}]
\label{lem:story-aim-improvement}
{Fix a state distribution \(\rho\) and replay policy \(\beta_k\).
Assume all expectations and KL divergences below are finite, and
proposed policies satisfy \(\pi(a\mid s)\ll\beta_k(a\mid s)\)
wherever \(\rho(s)>0\).
Equation~\eqref{eq:story-contextual-optimization} is lower bounded by the following objective:}
\begin{equation}
\label{eq:story-aim-surrogates}
    \max\nolimits_\pi\; L_k^\rho(\pi):=\mathbb E_{\rho(s)\pi(a\mid s)p(s'\mid s,a)}
    \left[\log\frac{q_k(a\mid s,s')}{\pi(a\mid s)}\right].
\end{equation}
{This lower bounds dataset mutual information for
all proposed \(\pi(a\mid s)\), and is tight at \(\beta_k\):}
\[
    L_k^\rho(\pi)\leq I(\rho,\pi),
    \qquad L_k^\rho(\beta_k)=I(\rho,\beta_k).
\]
{Thus improvement in the lower bound over \(\beta_k\)
improves dataset mutual information:}
\begin{equation}
\label{eq:story-aim-improvement}
    L_k^\rho(\pi)>L_k^\rho(\beta_k)
    \quad\Longrightarrow\quad
    I(\rho,\pi)>I(\rho,\beta_k).
\end{equation}
{The inequality holds under non-decreasing conditions as well.}
\end{lemma}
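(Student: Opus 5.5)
The plan is the standard Barber--Agakov argument. We compare $q_k$, the inverse dynamics induced by the replay policy $\beta_k$, with the true posterior $q_\pi(a\mid s,s')=p(s'\mid s,a)\pi(a\mid s)/p_\pi(s'\mid s)$ from Equation~\eqref{eq:story-app-variational-posterior}.

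\textbf{Step 1: the gap.} For any $\pi\ll\beta_k$, I would first rewrite the dataset mutual information in its posterior form. Bayes' rule gives $\log p(s'\mid s,a)-\log p_\pi(s'\mid s)=\log q_\pi(a\mid s,s')-\log\pi(a\mid s)$, so
\[
I(\rho,\pi)=\mathbb E_{\rho\pi p}\left[\log\frac{q_\pi(a\mid s,s')}{\pi(a\mid s)}\right].
\]
Subtracting $L_k^\rho(\pi)$ yields
\[
I(\rho,\pi)-L_k^\rho(\pi)=\mathbb E_{\rho\pi p}\left[\log\frac{q_\pi(a\mid s,s')}{q_k(a\mid s,s')}\right].
\]
As in the proof of Theorem~\ref{thm:story-data-bound}, I would then reorder the joint $\rho(s)\pi(a\mid s)p(s'\mid s,a)=\rho(s)p_\pi(s'\mid s)q_\pi(a\mid s,s')$. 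This rewrites the gap as
\[
\mathbb E_{\rho(s)p_\pi(s'\mid s)}\left[D_{\mathrm{KL}}\bigl(q_\pi(\cdot\mid s,s')\,\|\,q_k(\cdot\mid s,s')\bigr)\right]\ge 0,
\]
by nonnegativity of KL.

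\textbf{Support caveat.} This is the step needing care. The KL is finite only if $q_\pi(\cdot\mid s,s')\ll q_k(\cdot\mid s,s')$ for $p_\pi$-a.e.\ $s'$. I would argue this from the assumption $\pi\ll\beta_k$. If $q_k(a\mid s,s')=0$, then $\beta_k(a\mid s)p(s'\mid s,a)=0$, which forces $\pi(a\mid s)p(s'\mid s,a)=0$ almost everywhere, and hence $q_\pi(a\mid s,s')=0$. Otherwise I would invoke the stated finiteness assumptions.

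\textbf{Step 2: tightness at $\beta_k$.} Setting $\pi=\beta_k$ makes $q_\pi=q_k$ by definition of $q_k$, so the gap is zero and $L_k^\rho(\beta_k)=I(\rho,\beta_k)$.

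\textbf{Step 3: improvement.} Chaining the two facts gives
\[
I(\rho,\pi)\ge L_k^\rho(\pi)>L_k^\rho(\beta_k)=I(\rho,\beta_k),
\]
which is Equation~\eqref{eq:story-aim-improvement}. Replacing the strict inequality with $\ge$ throughout gives the non-decreasing version.

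I expect Step 1 to be the only real obstacle. Its algebra is exactly the posterior rewriting already used for Theorem~\ref{thm:story-data-bound}; the measure-theoretic support issue is the one place the argument is not routine.
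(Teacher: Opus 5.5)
Your proposal is correct and follows the paper's route: you rewrite $I(\rho,\pi)$ in posterior form, bound it below by $L_k^\rho(\pi)$, get tightness at $\beta_k$ from $q_{\beta_k}=q_k$, and chain the inequalities. The only difference is that you derive the Barber--Agakov gap explicitly as $\mathbb E_{\rho(s)p_\pi(s'\mid s)}[D_{\mathrm{KL}}(q_\pi(\cdot\mid s,s')\,\|\,q_k(\cdot\mid s,s'))]$ and check its support from $\pi\ll\beta_k$, whereas the paper simply cites the bound.
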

\par\endgroup

\begin{proof}[Proof]
Recall the definition of the variational posterior in
Eq.~\ref{eq:story-app-variational-posterior}. We can use it to rewrite
the ground-truth data mutual information as
\begin{equation}
    I(\rho,\pi)
    =\mathbb E_{\rho(s)\pi(a\mid s)p(s'\mid s,a)}
    [\log q_\pi(a\mid s,s')-\log\pi(a\mid s)].
\end{equation}
Using the variational lower bound proposed
by~\citet{barberagakovlowerbound2003}, we have
\begin{equation}
\begin{aligned}
    I(\rho,\pi)&\geq L_k^\rho(\pi)\\
    &=\mathbb E_{\rho(s)\pi(a\mid s)p(s'\mid s,a)}
    \left[\log\frac{\beta_k(a\mid s)p(s'\mid s,a)}{p_{\beta_k}(s'\mid s)}
    -\log\pi(a\mid s)\right],
\end{aligned}
\end{equation}
where \(q_k(a\mid s,s')
\triangleq\frac{\beta_k(a\mid s)p(s'\mid s,a)}{p_{\beta_k}(s'\mid s)}\)
is a variational posterior defined by the behavioral policy \(\beta_k\).

We note that the lower bound
\(L_k^\rho(\pi)\leq I(\rho,\pi)\) is tight if
\(\pi(a\mid s)=\beta_k(a\mid s)\) by plugging in the definition of
\(q_k(a\mid s,s')\): \(L_k^\rho(\beta_k)=I(\rho,\beta_k)\).
For any other policy \(\pi^+\) for which
\(L_k^\rho(\beta_k)<L_k^\rho(\pi^+)\), we have
\begin{equation}
    I(\rho,\beta_k)=L_k^\rho(\beta_k)
    <L_k^\rho(\pi^+)\leq I(\rho,\pi^+).
\end{equation}
For any valid policy improvement \(\pi^+\) on the
lower bound, the dataset mutual information will improve as well.
The same result holds with non-strict inequalities.

\end{proof}

\begingroup
\subsection{Proof of Theorem~\ref{thm:story-replay-optimizer}}
\label{app:story-replay-proof}
\par\endgroup

\begin{proof}[Proof]
\begingroup
Use the conditions in Appendix~\ref{app:story-aim-proof}, and assume
\(\rho\) and \(d_k\) are mutually absolutely continuous.
The replay-weighted contextual bandits problem is
\begin{equation}
\label{eq:story-replay-objective}
    \max_\pi\quad
    J_k^d(\pi):=\mathbb E_{d_k\pi p}[r_{\mathrm{AIM}}^k]
       -\mathbb E_{d_k}D_{\mathrm{KL}}(\pi\|\beta_k).
\end{equation}
\par\endgroup

We can define the log ratio as an intrinsic reward
\(r_{\mathrm{info}}^k(s,a,s')
=\log q_k(a\mid s,s')-\log\beta_k(a\mid s)\).
Rearranging terms in \(L_k^\rho(\pi)\), we have
\begin{equation}
    L_k^\rho(\pi)
    =\mathbb E_{\rho\pi p}
      [\log q_k(a\mid s,s')-\log\beta_k(a\mid s)]
      -\mathbb E_\rho D_{\mathrm{KL}}(\pi\|\beta_k),
\end{equation}
which is precisely the Lagrangian form of a KL-constrained optimization
problem with the Lagrange multiplier \(\eta=1\)~\citep{boyd2004convex}.
\begingroup
Its unique maximizer is a Boltzmann tilt of the following form~\citep{murphy2023probabilistic}:
\begin{equation}
    \pi_k^\star(a\mid s)\propto
    \beta_k(a\mid s)\exp\!\left(\mathbb E_p[r_{\mathrm{info}}^k]\right),
\label{eq:story-app-exponential-optimizer}
\end{equation}
where \(\mathbb E_p\) averages over \(s'\sim p(\cdot\mid s,a)\).
\par\endgroup

Separately, we can reweight the observed samples with a Boltzmann tilt of the state
marginal distribution:
\(\rho(s)/d_k(s)=\exp\!\left(\log\rho(s)-\log d_k(s)\right)\),
resulting in the following equality:
\begin{equation}
\begin{aligned}
    \rho(s)\beta_k(a\mid s)\exp\!\left(\mathbb E_p[r_{\mathrm{info}}^k]\right)
    &=d_k(s)\beta_k(a\mid s)
      \exp\!\left(\mathbb E_p[r_{\mathrm{info}}^k]+\log\rho(s)-\log d_k(s)\right)\\
    &=d_k(s)\beta_k(a\mid s)\exp\!\left(\mathbb E_p[r_{\mathrm{AIM}}^k]\right).
\end{aligned}
\end{equation}
These identical weights induce the same conditional policy
\(\pi_k^\star(a\mid s)\propto
\beta_k(a\mid s)\exp\!\left(\mathbb E_p[r_{\mathrm{AIM}}^k]\right)\),
which is also the statewise optimizer of \(J_k^d\).
Thus both problems share $\pi_k^*$ as an optimizer.

We apply Lemma~\ref{lem:story-aim-improvement} to the exponential optimizer, implying
\begin{equation}
    I(\rho,\pi_k^\star)-I(\rho,\beta_k)
    \geq L_k^\rho(\pi_k^\star)-L_k^\rho(\beta_k).
\end{equation}
Improvement is strict if \(\pi_k^\star\ne\beta_k\) on states with
positive evaluation probability.
\end{proof}

\section{Scaling Passive Data Collection}
\label{app:scaling-results}

\begin{figure}[htbp]
    \centering
    \includegraphics[trim=0bp 129.1621bp 394.4911bp 0bp,clip,width=0.95\textwidth]{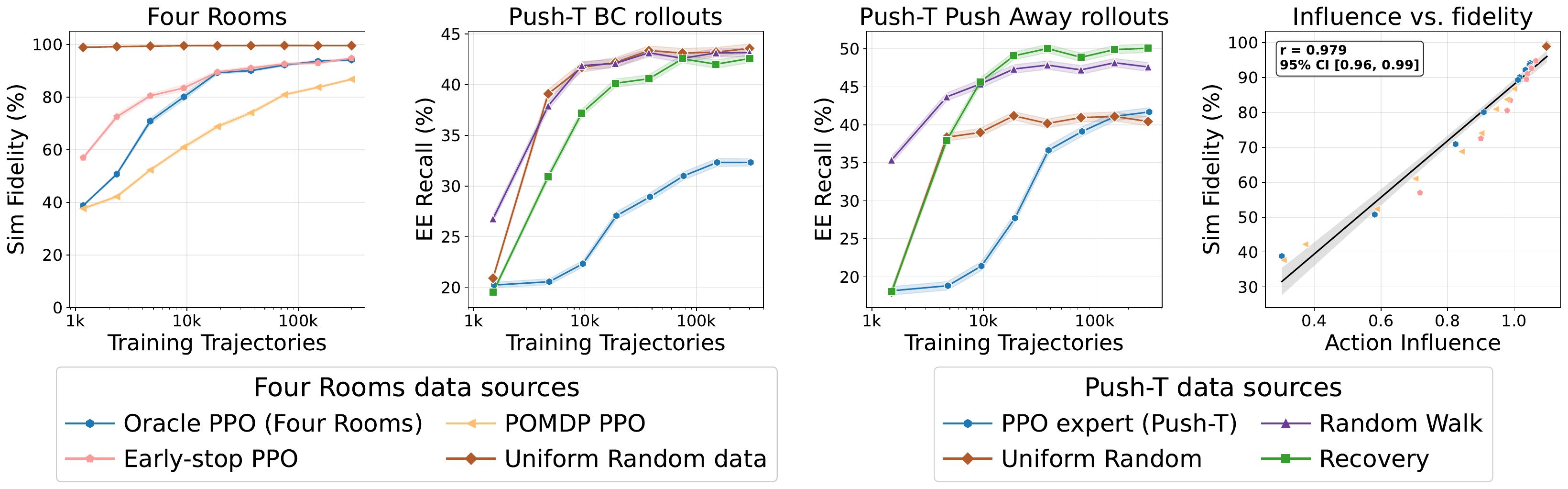}\\[2pt]
    \includegraphics[trim=0bp 0bp 0bp 362.3439bp,clip,width=0.95\textwidth]{figures/composite_figures/composite_scaling_controllability.pdf}

    \caption{
  \textbf{Data Scaling Trends for World Models:} We compare the simulation fidelity of world models trained on near-expert data (different types of PPO policies), recovery data, and random data on FourRooms~\cite{MinigridMiniworld23} and Push-T~\cite{taomaniskill3} environments.
  Four Rooms Sim Realism estimates predicted states through maximum likelihood and enumerates the state-action space exhaustively for full evaluation reference distribution coverage.
  For Push-T, such exhaustive coverage is infeasible, so instead we evaluate the pixel recall of the ground truth sim end effector mask on H=10 horizon rollouts with on-policy Behavior Cloning and off-policy "Push Away" expert rollouts.
    }
    \label{fig:scaling-policy-data}
\end{figure}

In Figure~\ref{fig:scaling-policy-data}, we analyze scaling behavior of passive data collection policies: in Four Rooms, uniform random data is both theoretically and practically strongest: uniform actions with uniform state resets provides near optimal evidence for maximizing explained action information score and thus controllability in fully observable, deterministic settings, approximating exhaustive coverage.
Expert-like PPO policies require much more data because their policy optimality purposefully underweights sampling many action informative alternate transitions.
Push-T shows the same qualitative lesson in a harder setting: broader non-expert data improves controllability, but the best data source depends on the specific eval rollout distribution.
These results show that a better data generating policy can be worth \(3\times\) the scaling efficiency per sample in improved controllability.
Random exploration works well when coverage is easy, but becomes less effective in high-dimensional manipulation with biased starts and partial observability.
Additionally, in Four Rooms, explained action information correlates strongly with state-extracted simulation fidelity when evaluated exhaustively over the state space, supporting our hypothesis that controllability is a useful surrogate for task-relevant fidelity.

\FloatBarrier

\section{Policy Details}
\label{apx:policy}
\label{app:story-report-figures}

\begin{figure}[htbp]
  \centering
  \begin{minipage}[t]{0.485\linewidth}
    \vspace{0pt}
    \begin{algorithm}[H]
      \color{black}
      \captionsetup{font=small,labelfont={color=black},textfont={color=black}}
      \caption{Batch AIM collection}
      \label{alg:aim-collection-implementation}
      \footnotesize
      \begin{algorithmic}[1]
        \Require Replay $\mathcal D_0$, density checkpoint $M_{-1}$, rounds $K$, batch sizes $B_k$
        \For{$k=0,\ldots,K-1$}
          \State Fit reward and annotate replay using Algorithm~\ref{alg:aim-reward-fitting}:
          \Statex \hspace{\algorithmicindent} $(M_k,\widehat{\mathcal D}_k)\gets
            \textsc{FitReward}(\mathcal D_k,M_{k-1})$
          \State Initialize collector $\pi_k$ from $M_k$'s behavior chunk policy; reset IQL critic and value heads
          \State Fit IQL, behavior cloning, and AWR flow matching on $\widehat{\mathcal D}_k$; enable AWR after critic warmup
          \State Collect $B_k$ environment trajectories $\mathcal T_k$ using $\pi_k$
          \State $\mathcal D_{k+1}\gets\mathcal D_k\cup\mathcal T_k$
        \EndFor
        \State \Return $\mathcal D_K$ for downstream world-model training
      \end{algorithmic}
    \end{algorithm}
  \end{minipage}\hfill
  \begin{minipage}[t]{0.485\linewidth}
    \vspace{0pt}
    \begin{algorithm}[H]
      \color{black}
      \captionsetup{font=small,labelfont={color=black},textfont={color=black}}
      \caption{Reward fitting and annotation}
      \label{alg:aim-reward-fitting}
      \footnotesize
      \begin{algorithmic}[1]
        \Require Replay $\mathcal D_k$, prior checkpoint $M_{k-1}$
        \Statex \textsc{FitReward}$(\mathcal D_k,M_{k-1})$:
        \State Warm-start from $M_{k-1}$; fit $\hat q_k$, $\hat\beta_k$, $\hat d_k$, and a behavior chunk policy on $\mathcal D_k$
        \State Jointly train divergence heads for the three reward densities
        \State Select checkpoint $M_k$ using held-out likelihood; freeze it
        \State Integrate flows and learned divergences to score all replay transitions
        \State Standardize each term using refreshed global replay statistics:
        \Statex $\hat r_k=z_k(\log\hat q_k-\log\hat\beta_k)-z_k(\log\hat d_k)$
        \State Annotate $\mathcal D_k$ with $\hat r_k$ to obtain $\widehat{\mathcal D}_k$, fixed for policy fitting
        \State \Return $(M_k,\widehat{\mathcal D}_k)$
      \end{algorithmic}
    \end{algorithm}
  \end{minipage}
\end{figure}

\begin{table}[tb]
\centering
\caption{Four Rooms Policy Statistics. All models are trained with PPO~\citep{schulman2017proximalpolicyoptimizationalgorithms} in Stoix~\citep{toledo2024stoix}.}
\label{tab:fourrooms-policy-statistics}
\begin{tabular}{l c c c}
\toprule
Expert Policy Name & Steps Trained & Entropy & Eval Return\\
\midrule
POMDP PPO & 300M & 0.054 & 0.657 \\
Early Stop PPO & 50M & 0.252 & 0.587 \\
State Oracle PPO & 300M & 0.022 & 0.839 \\
\bottomrule
\end{tabular}
\end{table}

\begin{table}[t]
  \centering
  \setlength{\belowcaptionskip}{10pt}
  \caption{Policy hyperparameters. All models share a frozen,
  per-domain–finetuned SD-VAE encoder~\citep{rombach2022stablediffusion} (bf16 activations, fp32 params),
  $144{\times}144$ inputs encoded to $18{\times}18{\times}4$ latents
  ($81$ tokens/camera), 2D sin--cos position embeddings
  ~\citep{vaswani2017attention, mildenhall2020nerf}, and pre-LN
  transformer blocks with QK-norm~\citep{henry-etal-2020-query} and SwiGLU FFNs~\citep{shazeer2020gluvariantsimprovetransformer}. IQL~\citep{kostrikov2022iql} variants add a
  cross-attention decoder on the shared BC backbone. Only the VAE is frozen across tasks.}
  \label{tab:policy-hparams}
  \small
  \resizebox{\linewidth}{!}{\begin{tabular}{l cccc}
    \toprule
    & Softmax BC & Diffusion BC & Softmax IQL+AWR & Diffusion IQL+AWR \\
    \midrule
    \multicolumn{5}{l}{\emph{Task}} \\
    Environment        & FourRooms~\citep{nikulin2024xlandminigrid, MinigridMiniworld23} & Push-T~\citep{taomaniskill3}        & FourRooms   & Push-T \\
    Action space       & 6 discrete  & 6-D cont.    & 6 discrete  & 6-D cont. \\
    \midrule
    \multicolumn{5}{l}{\emph{Architecture}} \\
    Model dim          & 256 & 256 & 256 & 256 \\
    Encoder layers     & 6   & 12  & 6   & 12 \\
    Attention heads    & 8   & 8   & 8   & 8 \\
    FFN ratio (SwiGLU) & 4   & 3   & 4   & 3 \\
    IQL decoder layers & --  & --  & 6   & 6 \\
    Action chunk $C$   & --  & 16  & --  & 16 \\
    \midrule
    \multicolumn{5}{l}{\emph{IQL + AWR}} \\
    Expectile $\tau$              & -- & -- & 0.7      & 0.7 \\
    Discount $\gamma$            & -- & -- & 0.99     & 0.99 \\
    AWR inverse temp.\ $\beta$   & -- & -- & 0.1      & 0.1 \\
    AWR weight clip              & -- & -- & $[0,100]$ & $[0,100]$ \\
    AWR warmup (steps)           & -- & -- & $20\mathrm{k}{+}5\mathrm{k}$ ramp & $20\mathrm{k}{+}5\mathrm{k}$ ramp \\
    Target EMA $\tau_\mathrm{EMA}$ & -- & -- & 0.005  & 0.005 \\
    \midrule
    \multicolumn{5}{l}{\emph{Optimization}} \\
    Optimizer            & AdamW~\citep{loshchilov2018adamw} & AdamW & AdamW & AdamW \\
    Peak LR              & $1\mathrm{e}{-4}$ & $2\mathrm{e}{-4}$ & $1\mathrm{e}{-4}$ & $2\mathrm{e}{-4}$ \\
    Weight decay         & 0.01 & 0.01 & 0.01 & 0.01 \\
    Grad clip (norm)     & 1.0  & 1.0  & 1.0  & 1.0 \\
    LR schedule          & \multicolumn{4}{c}{warmup~\citep{goyal2018accuratelargeminibatchsgd} $\to$ cosine to 0} \\
    Train steps          & 30k & 100k & 50k & 50k \\
    Global batch (seq.)  & 256 & 256 & 256 & 256 \\
    Params (non-VAE)     & 19.5M & 30.6M & 26.9M & 44.6M \\
    \bottomrule
  \end{tabular}
}
\end{table}

\begin{table}[t]
  \centering
  \setlength{\belowcaptionskip}{10pt}
  \caption{SD-VAE~\citep{rombach2022stablediffusion} decoder-only finetuning hyperparameters. One recipe produces the
  per-domain tokenizers shared by all policy and world models; only
  augmentation and frame budget differ across domains.}
  \label{tab:vae-hparams}
  \small
  \begin{tabular}{l l}
    \toprule
    Hyperparameter & Value \\
    \midrule
    Initialization        & \texttt{sd-vae-ft-mse}~\citep{rombach2022stablediffusion}  \\
    Finetuned scope       & decoder + post-quant conv (49.5M); encoder frozen \\
    Image resolution      & $144\times144\times3 \to [-1,1]$ \\
    Latent grid           & $18\times18\times4$ ($8\times$ downsample) \\
    Latent scaling        & $0.18215$ \\
    \midrule
    Loss                  & MSE + LPIPS (VGG)~\citep{zhang2018perceptual}\\
    Loss weights          & 1.0 / 1.0, each EMA-normalized ($\alpha=0.99$)~\citep{hafner2025dreamerv4} \\
    Latent encoding       & posterior sample \\
    \midrule
    Optimizer             & AdamW~\citep{loshchilov2018adamw} \\
    Peak LR               & $1\mathrm{e}{-5}$ \\
    LR schedule           & 5k warmup~\citep{goyal2018accuratelargeminibatchsgd} $\to$ cosine to 0 \\
    Weight decay          & 0.001 \\
    Grad clip (norm)      & 1.0 \\
    Train steps           & 50k \\
    Global batch          & 96 sequences \\
    Augmentation          & FourRooms: color inversion + translation; Push-T: none \\
    Seed                  & 42 \\
    \bottomrule
  \end{tabular}
\end{table}

We train a number of policies throughout this work across both discrete and continuous domains.

For the discrete domain Four Rooms, we implement our expert policies in the Stoix RL framework~\citep{toledo2024stoix} via a Multi-Layer Perceptron (MLP) which consumes local field of view simulator state features and any extra provided state information (e.g. absolute position, goal position, and door positions for the Oracle variant). 
These policies predict a softmax action distribution which is then sampled from to provide stochasticity. 
These are then nominally trained for 300 million environmental transitions via PPO~\citep{schulman2017proximalpolicyoptimizationalgorithms} to achieve reward saturation (See \Cref{tab:fourrooms-policy-statistics} for details).
For our behavior cloning policies, we instead use a pixel-based softmax policy with a Stable Diffusion Variational Auto-encoder (VAE)~\citep{rombach2022stablediffusion, Kingma2013AutoEncodingVB} and a ViT~\citep{dosovitskiy2021an} on top to provide feature tokens which are cross attended to by a learned policy decoder token. 
Our IQL~\citep{kostrikov2022iql} implementation similarly takes the same ViT encoded tokens and cross attends with a readout decoder transformer which takes in a learned value token and two learned Q tokens for pessimism.
The output decoder tokens are fed into individual MLPs to readout the estimated policy or value estimate for usage.
The VAE encoder remains fixed throughout training, everything else is trained end-to-end.

For the continuous domain Push-T, we construct the expert policy based on the Maniskill~\citep{taomaniskill3} simulator state-based continuous PPO~\citep{schulman2017proximalpolicyoptimizationalgorithms} implementation and likewise use their dense Push-T reward to learn a policy.
For the "Push Away" Push-T task we invert the goal footprint reward to encourage placement of the tee on the table in non-overlapping positions with the goal, providing a near symmetric inverse of the original Push-T task in terms of value. 
We also add an intrinsic novelty reward $r^{\text{nov}}_{t,e}$ to encourage diverse positions. It is implemented as the following
\begin{align}
    r^{\mathrm{nov}}_{t,e} &= \min_{j \in \mathcal{T}_{i,t}} \bigl\lVert \mathbf{x}^T_{i, t} - \mathbf{x}^T_{j}\bigr\rVert_2
\end{align}
which is computed by the minimum distance to a random subset of $n=32$ states sampled uniformly from all parallel trajectory rollouts, thereby acting like a pairwise contrastive term~\cite{eysenbach2022contrastive, handsell2006contrastivelearning} to encourage diverse trajectory states.
BC Failure data is generated by a geometrically distributed switching between Push-T BC and Push Away policies, with a 3:1 bias towards Push-T BC.
For the continuous behavior cloning policies, we use a diffusion policy using a similar Stable Diffusion VAE~\citep{rombach2022stablediffusion} and ViT~\citep{dosovitskiy2021an} backbone to the discrete implementation, which provides cross-attention context for a diffusion policy~\citep{Chi2023DiffusionPolicy} implemented via flow matching~\citep{lipman2023flow}. 
We parameterize actions as delta-actions with respect to the current robot end-effector frame.

The continuous IQL~\citep{kostrikov2022iql} implementation similarly uses Q and V decoder tokens in the same decoder structure.
Since we use an action chunking policy~\citep{zhao2023act}, its natural that the Q function must evaluate action chunks rather than just individual actions, and we adopt an action chunking n-step bootstrap target similar to Q-Chunking~\citep{li2026qchunking}.
The input actions are normalized and embedded using cosine positional encoding embeddings~\citep{mildenhall2020nerf, vaswani2017attention} to provide tokenization, and added to learned decoder query tokens, where each token represents one timestep in the chunk.
These tokens are then concatenated with the state context tokens to enable Q estimation. 
Policy extraction is then easy, where a single AWR~\citep{peters2010relativeentropypolicysearch, peng2019awr} weight is applied in flow matching to the whole target action chunk.
Further information on hyperparameters used in training and finetuning the VAE, Policies, and Critics can be found in \Cref{tab:vae-hparams} and \Cref{tab:policy-hparams}.

Finally, as mentioned in \Cref{sec:implementation-details}, we apply z-score normalization to the raw densities with the population statistics before using them as rewards due to their significant differences in magnitude resulting from the high observation dimensionality and the relatively low action dimensionality.
Z-score normalization can be seen as baselining of the rewards and power scaling of each of the Boltzmann tilt reward terms, and renders each of the rewards dimensionless which makes hyperparameter tuning of reward weights relatively easy and hyperparameter insensitive.
An additional perspective is that this normalization is a tunable choice for a more restrictive trust region of the policy update.
This results in the following practical learned AIM density reward used in our experiments:
\begin{align}
    \hat{r}_{\text{AIM}}(s,a,s') = \text{z\_score}(\log \hat{q}_{\beta_k}(a|s,s') -\log \hat{\beta_k} (a|s)) - \text{z\_score}(\log \hat{d_k}(s)).
\end{align}

\section{World Model Architectural Details}
\label[appendix]{apx:wm-arch}

\label{sec:model_architecture}

\begin{figure*}[htbp]
    \centering
\includegraphics[width=0.95\textwidth]{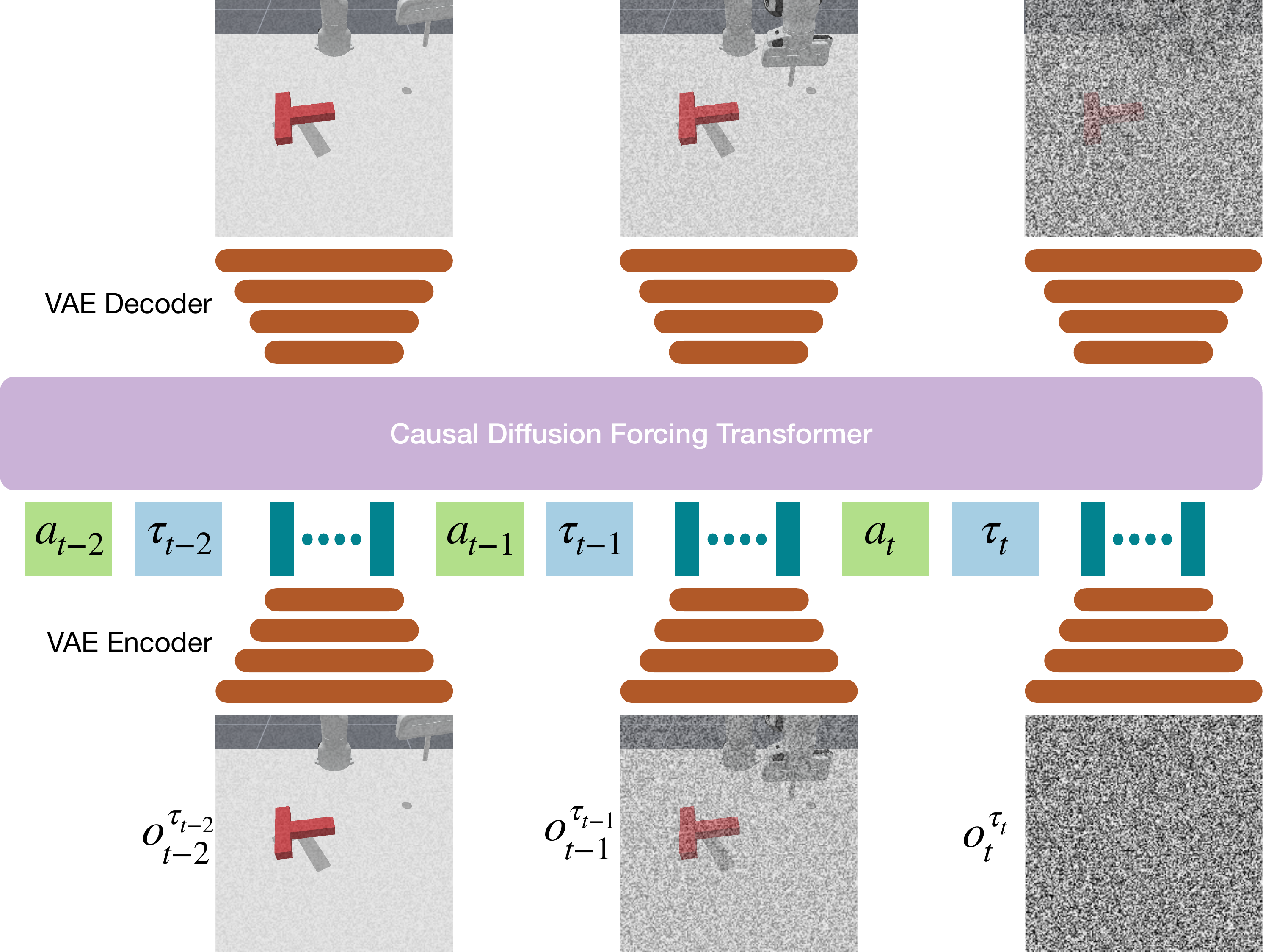}

    \caption{
    \textbf{Latent Diffusion Forcing Model Architecture:} 
    We train DreamerV4-like~\citep{hafner2025dreamerv4} latent diffusion forcing~\cite{chen2025diffusionforcing} action-conditioned causal autoregressive video world models with VideoRoPE~\citep{wei2025videorope} positional encodings and Stable Diffusion VAE~\cite{rombach2022stablediffusion} latent representations.
    These represent a strong world model architectural baseline which is held fixed in architecture and training recipe across generated datasets. These models generate human interpretable videos which help to identify degradations in the model, allowing us to focus on the impact of the data the model is trained on rather than specific architectural details.
    }
    \label{fig:diffusion-forcing-model-arch}
\end{figure*}

\begin{table}[t]
  \centering
  \setlength{\belowcaptionskip}{10pt}
  \caption{World model hyperparameters. Both share a frozen,
  per-domain–finetuned SD-VAE (bf16 activations, fp32 params),
  $144{\times}144$ inputs encoded to $18{\times}18{\times}4$ latents, pre-LN
  transformer blocks with QK-norm and SwiGLU FFNs, and a flow-matching
  velocity-MSE objective (EMA-normalized, $\alpha=0.99$). Neither uses a
  target network; only the VAE is frozen. For human evaluation and ablations
  in Section~\ref{sec:aim-ablations}, we instead train the Push-T models
  at $256{\times}256$ inputs to make them easier to evaluate.}
  \label{tab:wm-hparams}
  \small
  \begin{tabular}{l cc}
    \toprule
    & Latent Diffusion WM~\citep{peebles2023dit} & Diffusion Forcing WM~\citep{hafner2025dreamerv4, chen2025diffusionforcing} \\
    \midrule
    \multicolumn{3}{l}{\emph{Setting}} \\
    Domain      & FourRooms~\citep{nikulin2024xlandminigrid, MinigridMiniworld23} & Push-T~\citep{taomaniskill3} \\
    Prediction  & single-transition next-frame & sequence-level + AR rollout \\
    \midrule
    \multicolumn{3}{l}{\emph{Architecture}} \\
    Model dim          & 512 & 512 \\
    Layers             & 12 enc.\ + 12 dec. & 12 (9 spatial / 3 temporal)~\citep{hafner2025dreamerv4} \\
    Attention heads    & 8 & 8 \\
    FFN ratio (SwiGLU) & 3 & 3 \\
    Position encoding  & 2D sin--cos~\citep{vaswani2017attention, mildenhall2020nerf} & VideoRoPE~\citep{wei2025videorope} \\
    \midrule
    \multicolumn{3}{l}{\emph{Inference}} \\
    Sampler            & Euler & Euler \\
    Denoising steps    & 128 & 128 \\
    \midrule
    \multicolumn{3}{l}{\emph{Optimization}} \\
    Optimizer            & AdamW~\citep{loshchilov2018adamw} & AdamW \\
    Peak LR              & $2\mathrm{e}{-4}$ & $2\mathrm{e}{-4}$ \\
    Weight decay         & 0.01 & 0.01 \\
    Grad clip (norm)     & 5.0 & 5.0 \\
    LR schedule          & 10k warmup~\citep{goyal2018accuratelargeminibatchsgd} $\to$ cosine to 0 & 10k warmup $\to$ cosine to 0 \\
    Train steps          & 100k & 100k \\
    Global batch (seq.)  & 256 & 128 \\
    Max sequence length  & 48 & 64 \\
    Params (non-VAE)     & 97.8M & 44.5M \\
    \bottomrule
  \end{tabular}
\end{table}

We use two latent diffusion models in our experiments, one each for discrete and continuous environments.
We implement all models in JAX~\citep{jax2018github} and for both environments we train the world models from scratch without pretrained weights to avoid data contamination.

In Four Rooms, we train a simple DiT~\citep{peebles2023dit} under a flow matching objective~\citep{lipman2023flow} with state context generated from a Stable Diffusion VAE encoder~\citep{rombach2022stablediffusion} fed through a ViT~\citep{dosovitskiy2021an} to tokenize the latent representation. 
Since the environment is fully observable, we only condition on the current state observation image through cross attention.
Actions are provided by concatenating a simple per-action learned entity embeddings~\citep{guo2016entityembeddingscategoricalvariables} to the cross attention context.

For the continuous and partially observable Push-T environment, we must incorporate means of temporal stability through historical context.
For that, we leverage a state-of-the-art latent Diffusion Forcing~\citep{chen2025diffusionforcing} model trained similarly with a flow matching objective~\citep{lipman2023flow} and a similar architecture to DreamerV4~\citep{hafner2025dreamerv4}.
Image observations are encoded with a Stable Diffusion VAE~\citep{rombach2022stablediffusion}, patchified with linear projection, and visualized with Stable Diffusion VAE decoders finetuned with in domain data.
Both its action and diffusion timestep tokenization are completed using positional encodings~\citep{vaswani2017attention, mildenhall2020nerf} fed through an MLP.
The causal transformer itself enforces a sequence causal self-attention to allow for inference time autoregressive rollouts using a VideoRoPE~\citep{wei2025videorope} positional encoding.
The pre-norm transformer is outfitted with modern transformer improvements such as QKNorm~\cite{henry-etal-2020-query} and SwiGLU~\citep{shazeer2020gluvariantsimprovetransformer}.
See \Cref{fig:diffusion-forcing-model-arch} for details.
Full information on the hyperparameters used for finetuning the VAE and training the world models can be found in \Cref{tab:vae-hparams} and \Cref{tab:wm-hparams} respectively.

\section{Experimental Setup}

\label{appendix wm details}

The objective of our experiments is to evaluate data-generating policies that improve the controllability of world models, rather than to compare specific world models or architectures. 
As such, all world models on a given environment are identical and trained with exactly the same schedule except for the data they are trained on.

In Four Rooms, we generate expert ``source" data with a uniform random policy and a number of PPO~\citep{schulman2017proximalpolicyoptimizationalgorithms} policy variants: Local Field of View/Partial Observability (denoted POMDP PPO), an early stopped POMDP PPO (Early Stop PPO), and a sim-state-privileged variant (Oracle PPO).
To evaluate performance, we sample n=5 random layouts, and then exhaustively evaluate every state-action pair, and match the predicted image with the maximum-likelihood state through exhaustive comparison to provide an accurate measure of simulator controllability and fidelity.

For Push-T, such exhaustive evaluation is not feasible or informative, so instead we evaluate on rollouts from three policy evaluation transfer targets: a weak behavior cloned (BC) diffusion policy~\citep{Chi2023DiffusionPolicy} trained on the original replay buffer, a ``Push Away" PPO~\citep{schulman2017proximalpolicyoptimizationalgorithms} policy with a dense reward encouraging diverse trajectories pushing the tee away from the traditional goal, and a BC Failure hybrid between the two to represent moderately weak learners.
This is meant to evaluate two primary target policy evaluation downstream targets: improving a BC policy with an on-policy world model, transferring a policy to a new task with an off-policy world model, and improving a weak learner which regularly makes non-expert mistakes.
To isolate the controllability of the world model itself, we evaluate each frame by the end-effector masked per-pixel mean absolute error recall (pixel threshold 10\% of range so each color is unique) with a simulator-privileged mask to isolate only end-effector control sensitivity, penalizing both poor position and poor reconstruction as a proxy for rollout stability.

Given the cost of training multiple seeds across so many models, we only train one model per dataset/scale/resolution.
For both \(H=10\) and \(H=33\), we evaluate each model with three inference seeds for each of 100 policy rollouts per evaluation distribution, using the same evaluator. We then perform bootstrap resampling of trajectories and inference seeds (\(n=10000\)) to capture the variance across evaluations within a model.
Our human arena A/B tests show two model-predicted rollouts alongside the corresponding ground-truth rollout to compare two models under a Bradley--Terry preference ELO rating with the prompt ``Which prediction more accurately represents the behavior of the robot in the ground truth video?'' (1545 comparisons, 20 users)~\citep{chiang2024chatbot}.

Push-T is a continuous state space where simple Brownian motion provides limited coverage, so for fair comparison we implement both uniform random and a random goal-reaching policy, in addition to a PPO expert and a recovery baseline that alternates between a random-walk policy and the PPO expert with geometrically distributed switching times, providing a synthetic RaC-style analogue~\citep{hu2025rac}.
To compare against the various kinds of intrinsic reward exploration methods, we implement DIAYN~\citep{eysenbach2018diyan}, DADS~\citep{Sharma2020dads}, ICM~\citep{pathak2017curiosity}, and METRA~\citep{park2024metra}. These methods represent the families of prior unsupervised exploration methods in RL, including mutual information skill discovery (DIAYN, DADS), curiosity (ICM), and metric-aware skill learning. 

We use a common batch-online protocol to train all of these methods: each agent is first pretrained on the same prior dataset of $3005$ trajectories (around $1.1\times10^5$
transitions) generated by an expert PPO policy, and then learns to explore using its own mechanism. We perform $2\times10^5$ gradient updates before collecting $100$ new trajectories using the learned exploratory policies. We repeat this process for $5$ rounds ($500$ online trajectories total). All these baselines share a single
off-policy SAC~\citep{Haarnoja2018sac} backbone: a tanh-squashed Gaussian actor and twin critics with $[512,256]$ hidden units over the $31$-dimensional Push-T state, and differ only
in the definition of the intrinsic rewards. Specifically, DIAYN~\citep{eysenbach2018diyan} maximizes the discriminability of $50$ discrete skills under a state discriminator. DADS~\citep{Sharma2020dads} learns a skill-conditioned mixture-of-Gaussians dynamics model ($z\in[-1,1]^8$, four mixture components) and rewards the mutual information between skills and next-state transitions. ICM~\citep{pathak2017curiosity} rewards forward-model
prediction error in a learned feature space (curiosity scale $\eta=50$, forward/inverse weight $\beta=0.3$). METRA~\citep{park2024metra} learns four continuous unit-norm
skills under a temporal-distance-preserving encoder with a dual Lipschitz
constraint (reward scale $10$). For DADS~\cite{Sharma2020dads} and METRA~\citep{park2024metra}, we restrict the intrinsic reward to the manipulation-relevant simulator state dimensions (the end-effector and tee poses)
and for ICM~\citep{pathak2017curiosity} and DIAYN~\citep{eysenbach2018diyan} to the tee pose alone, so that skills and curiosity
target object motion rather than proprioception. The
hyperparameters for all four methods are summarized in
Table~\ref{tab:exploration-hparams}.

\begin{table}[t]
\centering
\caption{Hyperparameters for the exploration baselines in Push-T.}
\vspace{0.5em}
\label{tab:exploration-hparams}
\resizebox{\textwidth}{!}{\begin{tabular}{lccc}
\toprule
Method & Intrinsic reward & Key hyperparameters & learning rates \\
\midrule
DIAYN~\citep{eysenbach2018diyan} & skill discriminability & $50$ discrete skills; disc.\ over T-block pose          & $3 \times 10^{-4}$ \\
DADS~\citep{Sharma2020dads} & skill--dynamics MI     & $z\in[-1,1]^8$, $4$ mixtures, $K{=}100$ prior samples   & $3 \times 10^{-4}$ \\
ICM~\citep{pathak2017curiosity}& forward-model error    & $\eta = 50$, $\beta = 0.3$, feature dim $256$           & $3 \times 10^{-4}$ \\
METRA~\citep{park2024metra} & temporal-metric skills & $4$ continuous skills, reward scale $10$, dual reg. & $1 \times 10^{-4}$ \\
\bottomrule
\end{tabular}
}
\end{table}

\section{Expanded Four Rooms Results}
\label{apx:expanded-four-rooms}
In addition to the results of \Cref{sec:results}, we further investigate three questions about the Four Rooms environment: 
\textbf{Q1:} What happens to world model controllability in the low data limit? 
\textbf{Q2:} What does data coverage look like when comparing Random vs AIM data collection?
\textbf{Q3:} What is the effect of batch online data collection on other measures of sim fidelity?

\subsection{World Models Can Collapse to Their Data Generating Policy in the Low Data Limit}

\begin{figure*}[htbp]
  \centering
  \vskip 0.2in

  \begin{subfigure}[t]{0.48\textwidth}
    \centering
    \includegraphics[width=\linewidth]{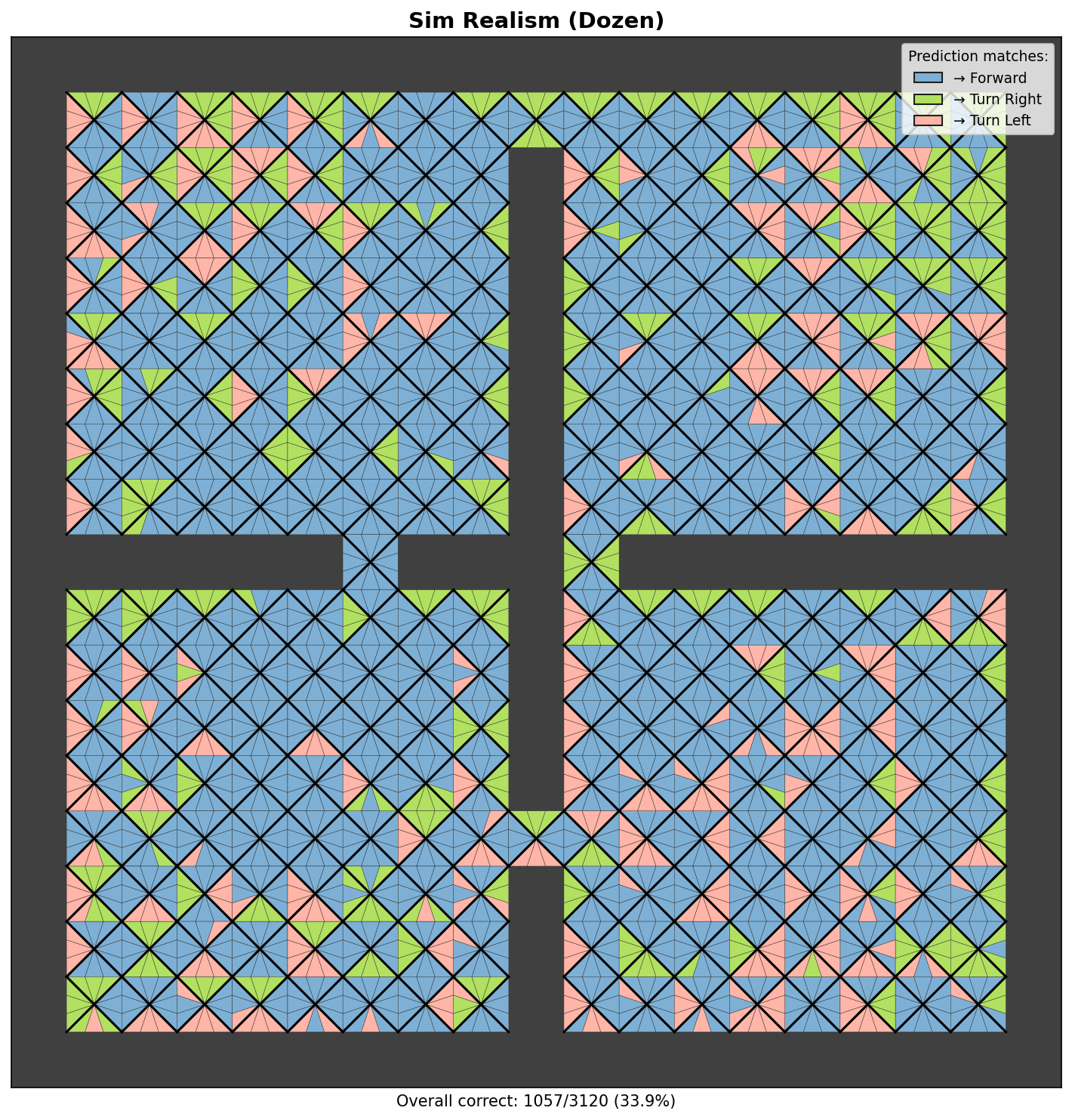}
    \caption{0.05\% dataset}
    \label{fig:early-stop-low-data}
  \end{subfigure}
  \hfill
  \begin{subfigure}[t]{0.48\textwidth}
    \centering
    \includegraphics[width=\linewidth]{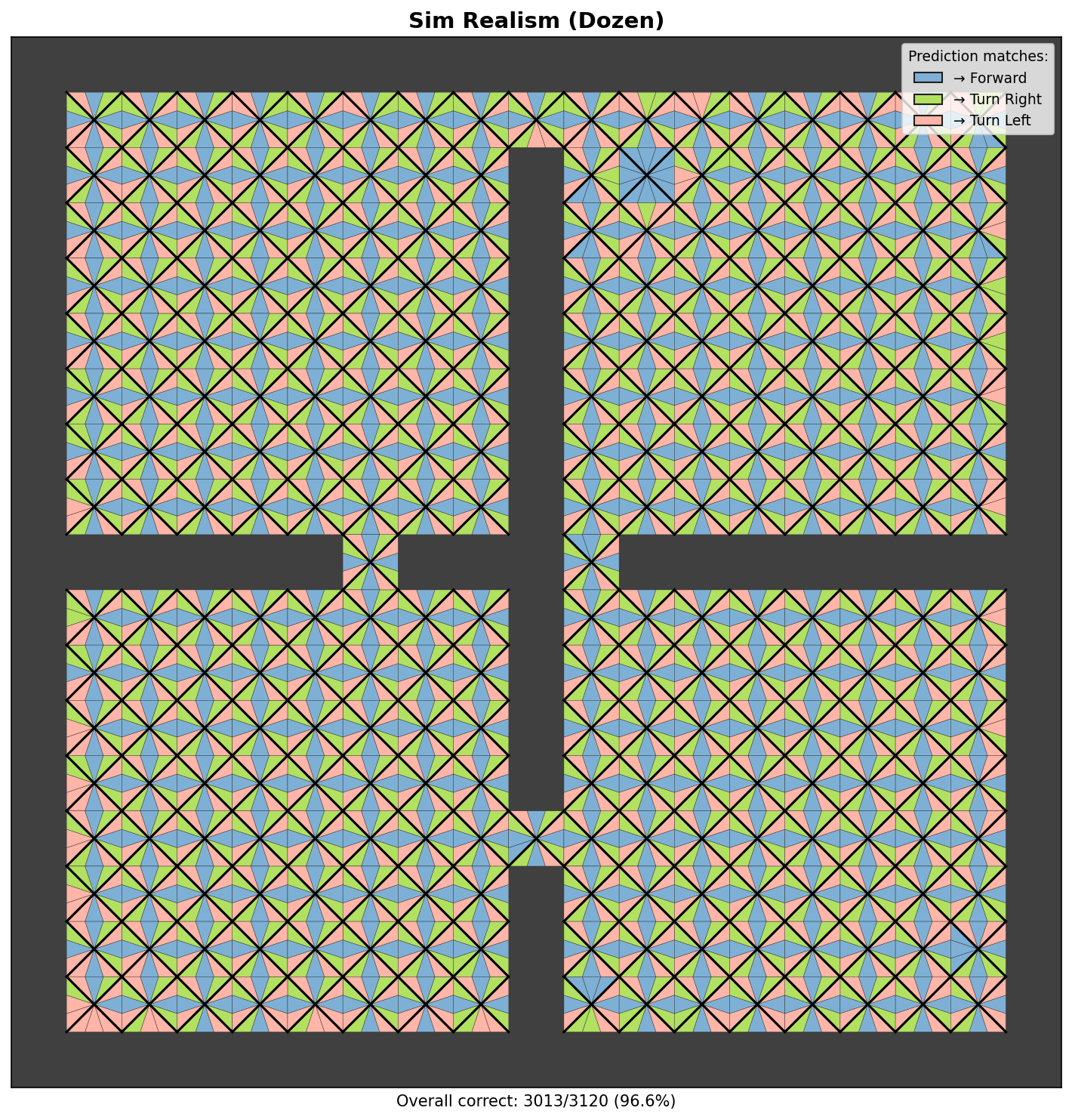}
    \caption{100\% dataset}
    \label{fig:early-stop-full-data}
  \end{subfigure}

  \caption{\textbf{Exhaustive Evaluation of World Model Action Sensitivity in Four Rooms:} This figure compares the evaluation results of one step action sensitivity of two data scales of a Four Rooms world model trained on expert data via the above ``Dozen plots". 
  Each quarter slice in the figure represents a state (grid position and direction of the agent), and within that slice it is split into 3 triangles representing each commanded Four Rooms action: forward, left, right.
  The color, as denoted in the top right, represents the action corresponding to the actual observed transition by the learned world model.
  One can see that, in low data regimes, the world model collapses to a near deterministic action policy where the model follows the most likely policy action regardless of action conditioning.
  However, the same model with more of the same data distribution becomes a near perfect action sensitive world model. 
  }
  \label{fig:fourrooms-pair-dozen-plots}
\end{figure*}

In \Cref{fig:fourrooms-pair-dozen-plots}, we explore the upper and lower limits of the quantity of expert data for training world models.
This figure shows an exhaustive evaluation of a Four Rooms world model trained on 150 trajectories compared to one on 300k trajectories, visualizing the expressive reproduction of action controllability in the 100\% data case, compared to the near non-existent action controllability of the low data regime on the left, where the world model transitions reproduce the underlying modes of the data generating policy near exactly under any action conditioning.
This proposes an intriguing hypothesis: the moderate action controllability of today's largest world models trained on seemingly endless internet video may actually be in the low data scaling regime of action-conditioned generative world models.
This further motivates our AIM intervention of generating more informative data to improve data scaling efficiency.

\subsection{AIM vs Random Fixed Start Data Coverage Ratios}
\label[appendix]{app:fourrooms-fixed-start-coverage}

\begin{figure}[htbp]
    \centering

    \begin{subfigure}[t]{0.20\linewidth}
        \centering
        \includegraphics[
            width=\linewidth,
            height=\linewidth,
            keepaspectratio
        ]{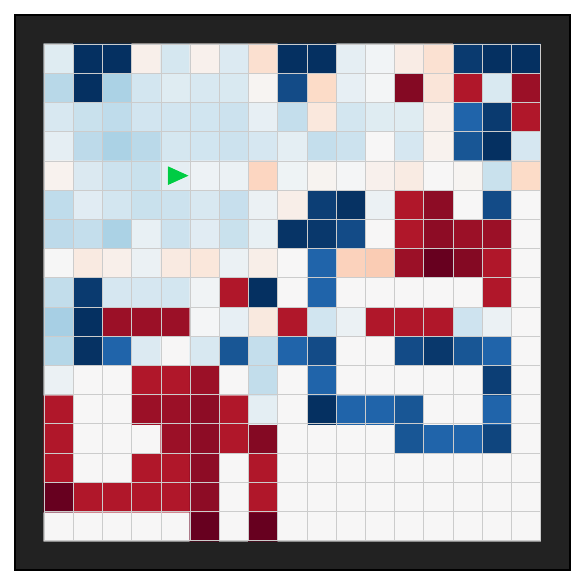}
        \caption{Iteration 1}
        \label{fig:appendix-3x1-a}
    \end{subfigure}
    \hfill
    \begin{subfigure}[t]{0.20\linewidth}
        \centering
        \includegraphics[
            width=\linewidth,
            height=\linewidth,
            keepaspectratio
        ]{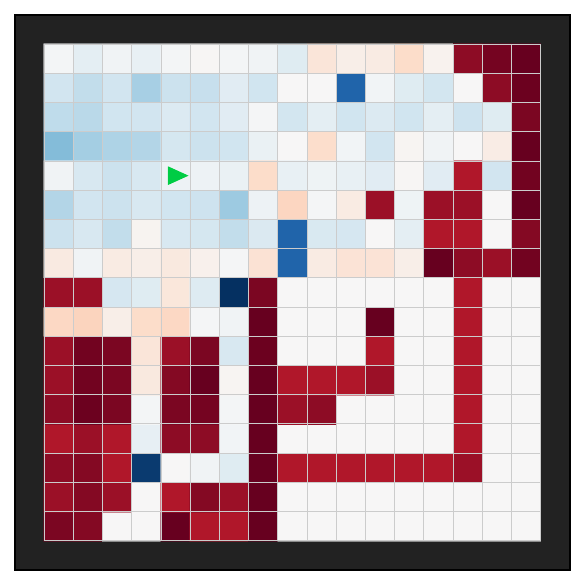}
        \caption{Iteration 2}
        \label{fig:appendix-3x1-b}
    \end{subfigure}
    \hfill
    \begin{subfigure}[t]{0.20\linewidth}
        \centering
        \includegraphics[
            width=\linewidth,
            height=\linewidth,
            keepaspectratio
        ]{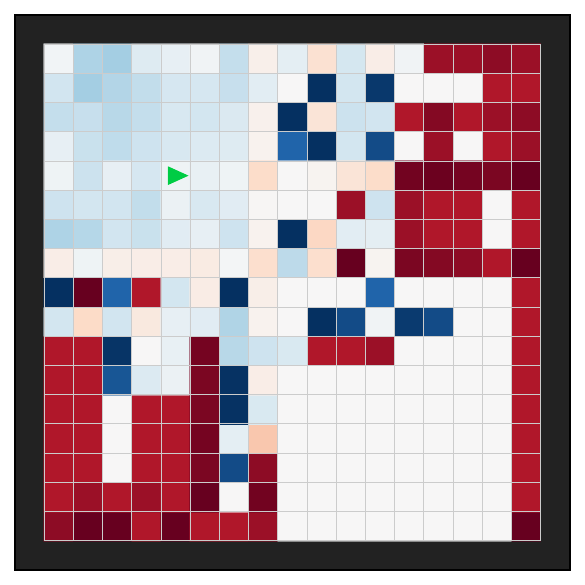}
        \caption{Iteration 3}
        \label{fig:appendix-3x1-c}
    \end{subfigure}
    \hfill
    \begin{subfigure}[t]{0.20\linewidth}
        \centering
        \includegraphics[
            width=\linewidth,
            height=\linewidth,
            keepaspectratio
        ]{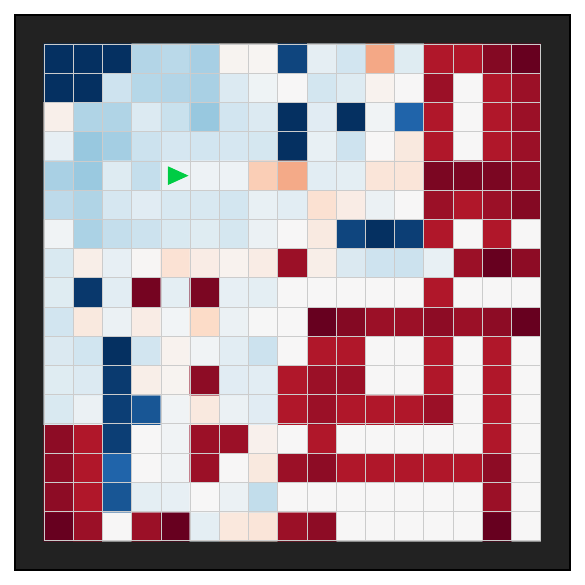}
        \caption{Iteration 4}
        \label{fig:appendix-3x1-d}
    \end{subfigure}
    \hspace{0.003\linewidth}
    \raisebox{0.00\linewidth}{
        \includegraphics[
            height=0.20\linewidth
        ]{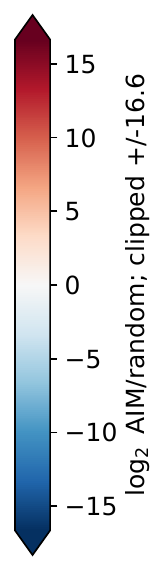}
    }
    \caption{
    \textbf{AIM-over-Random Fixed-Start Visitation Ratios:}
    In this figure, we fix the initial state of data collection rollouts to the pose of the green triangle in all figures. 
    Then, we compare state visitations in Four Rooms between rollouts generated by either successive AIM iterations or from a random policy as a heatmap of the clipped log ratio of their empirical state occupancy.
    More red represents more frequent AIM visitation, and more blue represents more frequent Random visitation.
    Pseudocounts with $\epsilon = 10^{-6}$ were added for normalization of the log ratios. 
    One can see AIM provides significantly better medium and long horizon state visitation compared to pure random, which leaves the initial top left room less frequently.
    Additionally, each iteration shows AIM providing coverage of largely different subsets of the state space, showing the value of AIM iteration and its ability to avoid revisiting previously explored regions.
    }
    \label{fig:appendix-3x1}
\end{figure}

One likely reason for the disagreement in optimal exploration between the low dimensional discrete Four Rooms and the higher dimensional Push-T task is that uniform random actions suffer from the curse of dimensionality as state-action dimensionality go up, making uniform state visitation support unlikely in a reasonable time horizon.
In contrast, AIM can explicitly perform goal-directed behavior to provide more uniform state visitation in aggregate.
To test this hypothesis, we compare state visitations in Four Rooms between rollouts generated by either successive AIM iterations or from a random policy, with all rollout episodes fixed to a predetermined initial state. 
This removes the influence of uniform state sampling which \Cref{fig:main-aim-results} shows is critical to the low rollout budget efficiency of uniform random actions in Four Rooms.
In \Cref{fig:appendix-3x1}, we visualize the ratio of state visitation between these two data collection policies.
The heatmaps show rollouts collected with successive AIM iteration policies are more likely to explore states farther from the start point, improving coverage and supporting the perspective that the policy actively explores the environment to search for novel state transitions that enhance the informativeness of the existing dataset.
Also of note is that AIM iterations cover largely non-overlapping sets of state visitation between iterations, showing that multiple batch iterations are useful to seek under-explored subsets of the state space and that the learned policy will largely avoid revisiting transitions it has already observed, precisely the desired behavior of a informative data acquisition policy.

\subsection{Effect of Batch Online Data Collection on Other Four Rooms Metrics}

\begin{figure*}[htbp]
    \centering
\includegraphics[width=0.95\textwidth]{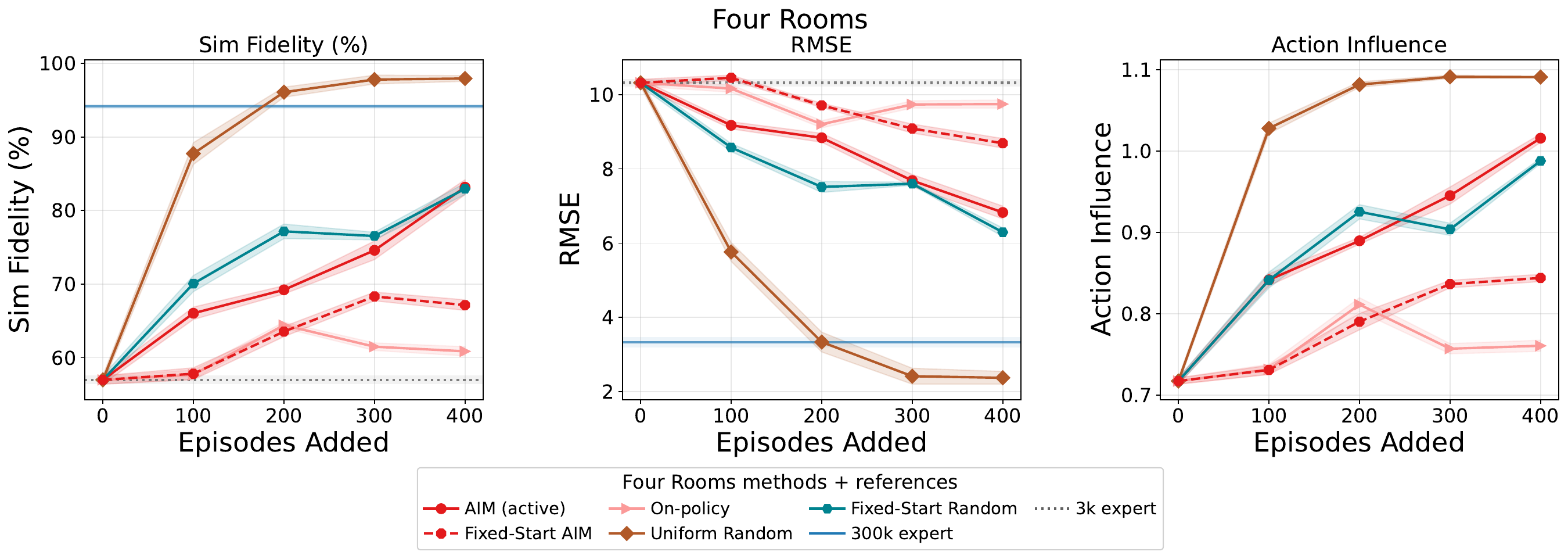}

    \caption{
    \textbf{AIM Four Rooms Extended Results:} 
    We provide the full side-by-side metric results from batch online active data collection in Four Rooms.
    Of note is the similar quality of RMSE as an ordered quality metric compared to the state-focused Sim Fidelity in this simple environment, which is in contrast to conventional wisdom around evaluating generative models.
    Additionally, we see that measured world model action influence grows proportionately with the evaluated sim realism, a metric computed by enumerating all actions, rolling them out in both the world model and the simulator, and evaluating their state extractive simulation consistency, weighted according to the action influence evaluation measure.
    }
    \label{fig:aim-fourrooms-extended-results}
\end{figure*}

In \Cref{fig:aim-fourrooms-extended-results}, we explore the effect of batch online data collection on other metrics.
One interesting detail is the strong correlation of pixel RMSE with sim fidelity, which is in contrast to broader expectations that generative models should be evaluated with measures of distributional similarity (e.g. FID~\citep{heusel2017fid}, FVD~\citep{unterthiner2019fvd}, etc.).
Instead, we find that, at least in this deterministic environment, RMSE pixel error is a strong proxy for state-extractive sim fidelity metrics. 
Our hypothesis is that this emerges in these simple deterministic environments because the only place where a model will make a generative mistake is where the interesting critical simulation state is most salient.
Thus, the pixel RMSE is largely focused on the same pixels as the state-extractive sim fidelity metric.

Additionally, we see that measured world model action influence grows proportionately with sim realism, reinforcing the relationship observed in \Cref{fig:fourrooms-action-information-fidelity} between world model controllability and our statistical action influence measure.
Another interesting observation is that observed action influence saturates at roughly the same rate as sim fidelity, indicating it may be a strong measure for simulator fidelity.
If explicitly optimized with a model's own generations (e.g. as a post-training RL objective), the action influence metric can easily be gamed by inventing new transitions or states which do not correspond with reality but satisfy the learned reward model.
Outside this pathological scenario and in conjunction with \Cref{fig:fourrooms-action-information-fidelity}, these indicate  that action influence may serve as a useful proxy for world model sim fidelity on real transition datasets when a high quality sim realism evaluation is not available.

\section{Expanded Push-T Results}
\label{apx:expanded-push-t}
Similarly to \Cref{apx:expanded-four-rooms}, we expand on our investigation from \Cref{sec:results} of the Push-T environment with two further questions:
\textbf{Q1:} Why do we need trust regions for density-based intrinsic motivation?
\textbf{Q2:} What is the effect of batch online data collection on other measures of sim fidelity?

\subsection{Trust Regions Suppress Out-of-Data Distribution Reward Estimation}

\begin{figure*}[htbp]
    \centering
\includegraphics[width=0.95\textwidth]{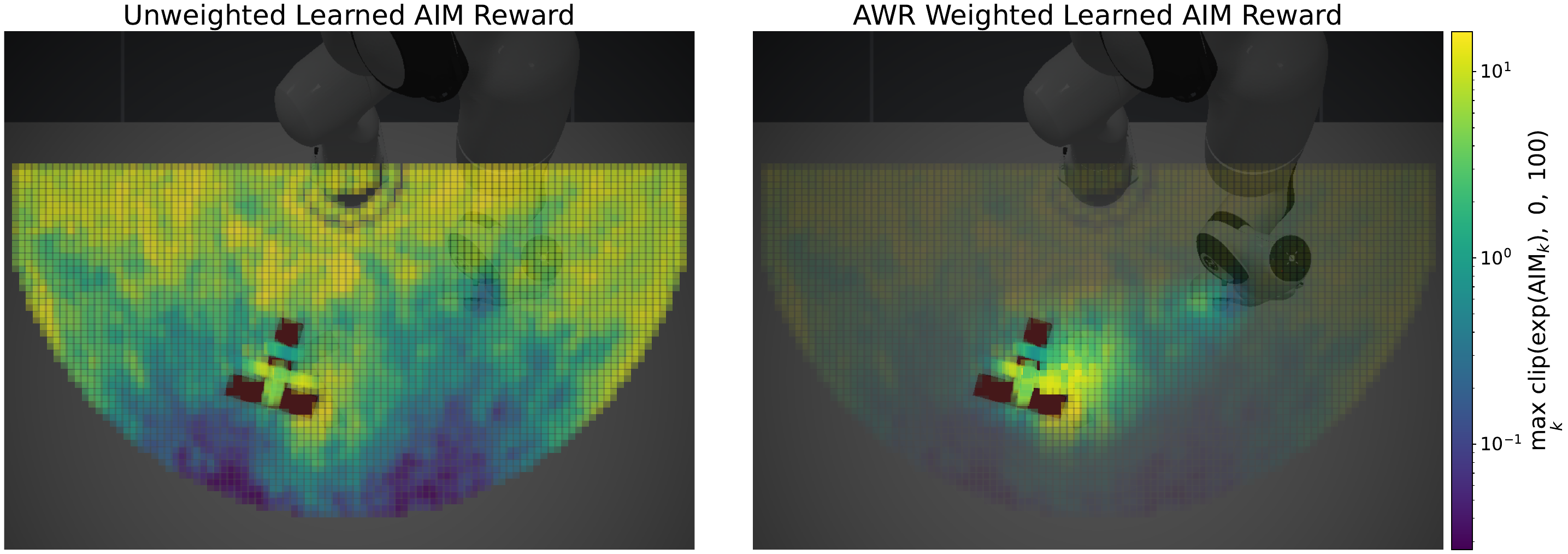}

    \caption{
    \textbf{The Need for Trust Regions for Density-Based Intrinsic Motivation in Push-T:} 
    In this experiment, we initialize the robot arm in the workspace in a grid and collect 4 transitions at each spot, one for each cardinal direction, then evaluate these transitions with the learned AIM reward model. 
    We then plot the max AIM reward on the above heatmap.
    We see that the learned AIM intrinsic motivation reward provides significant reward for regions outside of the nominal action workspace, and unbounded policy optimization would likely encourage unfocused state coverage similar to prior work~\citep{eysenbach2018diyan, Sharma2020dads, park2024metra}.
    In contrast, the AWR~\citep{peters2010relativeentropypolicysearch, peng2019awr} weighted reward focuses in a narrow region around pre-existing behavior, favoring task relevant information before broader exploration.
    }
    \label{fig:aim-reward-density-visualization}
\end{figure*}

In \Cref{app:story-aim-proof}, we show that AWR-style conservative KL constrained policy improvement~\citep{peters2010relativeentropypolicysearch, peng2019awr, levine2018controlasinference} is theoretically motivated for dataset action influence policy improvement.
Given prior mutual information-based skill diversity and exploration work~\citep{eysenbach2018diyan, Sharma2020dads, park2024metra} has used similar intrinsic motivation rewards and struggled to generalize to pixel observations and high dimensional action spaces, it seems worth investigating why this seemingly simple policy conservatism creates more focused exploration.
In \Cref{fig:aim-reward-density-visualization}, we evaluate the learned AIM reward on a grid of reachable states in the Push-T environment to visualize the underlying reward estimation from a fixed initial state condition.
We see that the learned AIM intrinsic motivation reward provides significant reward for regions outside of the nominal action workspace, and unbounded policy optimization would encourage unfocused state coverage, similar to prior work, toward regions that are less interesting for interaction but are otherwise highly rewarded for their diversity.
One possible explanation is that directly optimizing the AIM reward may result in the same diverse skills but relatively little environmental interaction pathology as prior work~\citep{eysenbach2018diyan, Sharma2020dads, park2024metra}, and that KL constrained policy optimization like AWR~\citep{peters2010relativeentropypolicysearch, peng2019awr} restricts the reward maximizing policy to regions near existing behavior priors, providing informative transitions which minimally deviate from prior expert data, focusing exploration to the most interesting, behavior-visiting regions first before exploring outward.
Additionally, another strength of evaluating our reward only in an offline RL~\citep{kostrikov2022iql, prudencio2023offlinerlsurvey} setting is that our learned density reward has reduced exposure to poorly calibrated out of distribution likelihood estimation which can occur in diffusion-style density modeling~\citep{kamkari2024geometricexplanationoflikelihoodoodparadox}, and that our distilled likelihood map~\citep{ai2026jointlikelihoodflowmaps} will never be queried out of distribution making its predictions lower error.

\section{Effect of Batch Online Data Collection on Other Push-T Metrics}

\begin{figure*}[htbp]
    \centering
\includegraphics[width=0.95\textwidth]{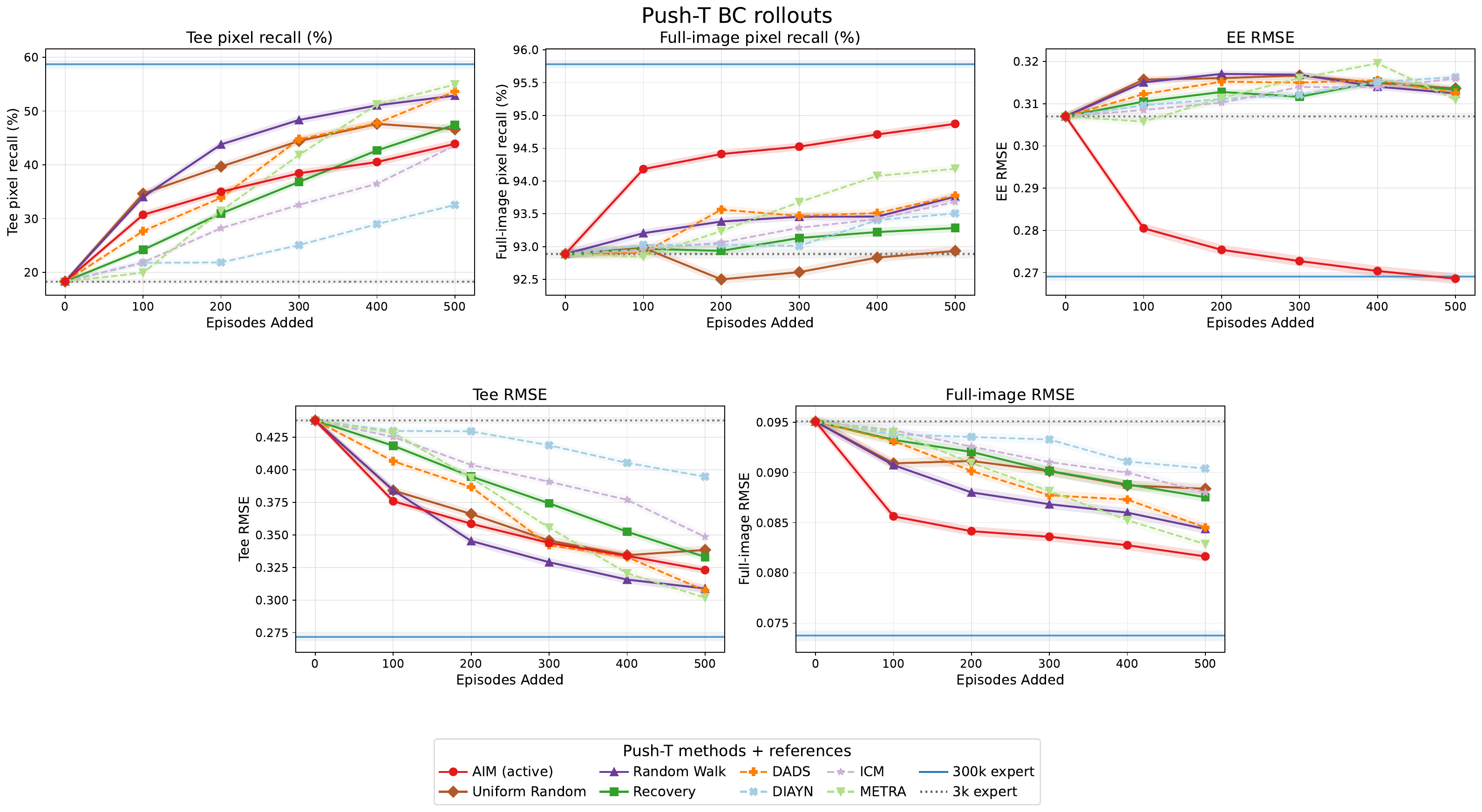}

    \caption{
    \textbf{AIM Behavior Cloning Push-T Extended Results:} 
    While End-Effector controllability is significantly improved by AIM, we see all model variants struggle with improving Tee pixel recall, a measure of simulator fidelity. 
    Nearly all exploration data collection policies we evaluate see noticeable improvement in tee recall, but even the top line 300k expert demonstration model struggles with performant tee manipulation.
    We hypothesize that this is a combination of poor coverage of tee positions in the dataset, as well as a lack of broad pre-training datasets in our experiments to improve extrapolation beyond the existing dataset.
    Additionally, we note that AIM shows significant improvement in full image pixel recall, outperforming the competing methods, and that pixel RMSE remains a strong metric of model fidelity in deterministic, nearly fully observable MDP settings despite the prevailing focus towards distributional evaluation metrics for generative world models~\citep{heusel2017fid, unterthiner2019fvd, ge2024fvdbias}.
    }
    \label{fig:aim-bc-pusht-extended-results}
\end{figure*}

\begin{figure*}[htbp]
    \centering
\includegraphics[width=0.95\textwidth]{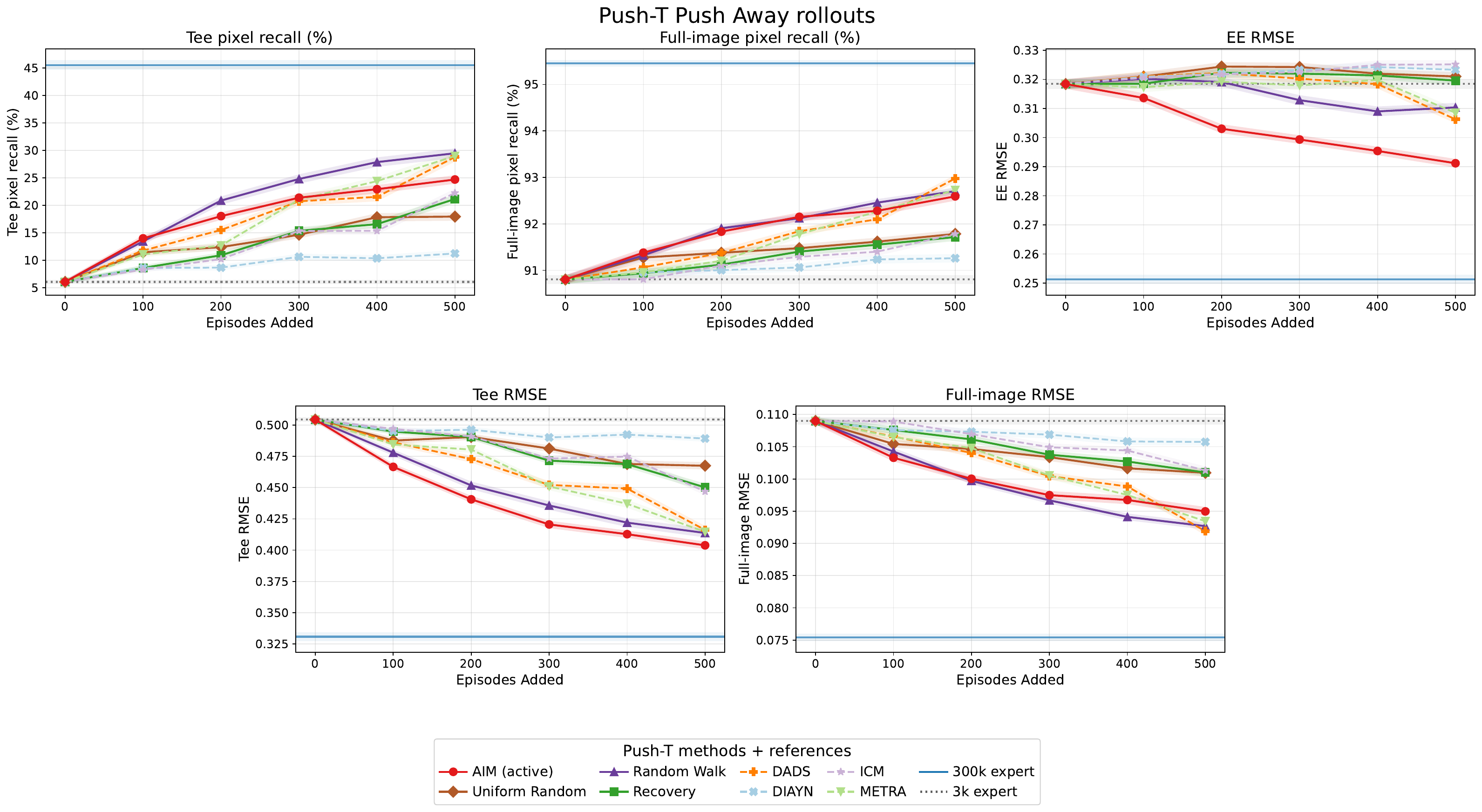}

    \caption{
    \textbf{AIM Anti-Push-T Extended Results:} 
    We see similar results to that of \Cref{fig:aim-bc-pusht-extended-results}, though dampened due to the significantly more off-policy nature of the ``Push Away" Push-T task which results in actions which are significantly more counter factual to those which exist in the pretraining data and thus harder to reproduce.
    This results in the worst data efficiency scaling from all of the active collection methods.
    }
    \label{fig:aim-anti-pusht-extended-results}
\end{figure*}

In \Cref{fig:aim-bc-pusht-extended-results} and \Cref{fig:aim-anti-pusht-extended-results}, we investigate the effects of batch online data collection on other measures of Push-T sim fidelity for both the Behavior Cloning and ``Push Away" policy evaluation distributions.
In particular, our controllability metric evaluates the controllability of the robot arm through end-effector pixel recall, which isn't necessarily the same as complete simulator fidelity in all cases.
We evaluate the low data regime scaling behavior of the tee pixel recall masked in the same way as the end-effector controllability measure.
It's clear that all data collection methods see similar performance improvements with additional off-policy data, with DIAYN~\citep{eysenbach2018diyan} being the lone underperformer.
Additionally, some of the intrinsic reward methods look particularly compelling on this metric, but this is likely due to their access to simulator state of the tee pose, as we found them unable to explore productively in the Push-T environment with only pixel observations.
However, qualitatively, even the highest data volume models struggle with tee fidelity on both evaluation distributions, and the differences are more stark in the off-policy ``push away" evaluation distribution in \Cref{fig:aim-anti-pusht-extended-results} where active data collection scaling is noticeably less efficient compared to the near expert behavior cloning data.
We hypothesize that better extrapolation to novel tee states would likely only happen with a rich prior data distribution to improve generalization, which we eschew in this work to prevent data contamination for precisely this reason. 
Overall, it seems pathological evaluation behaviors like ``push away" Push-T would remain difficult for world models to extrapolate to without noticeably more active data collection or some on-policy expert demonstrations to guide AIM exploration.

Additionally, similar to in \Cref{apx:expanded-four-rooms}, we find that full image pixel RMSE and recall is a surprisingly good realism metric in a deterministic world modeling setting, even when full observability is relaxed.
However, the disagreement between tee pixel recall ordering, end-effector controllability in \Cref{fig:main-aim-results}, and the full image pixel metrics begin to show full image pixel RMSE's lack of sensitivity to errors in semantically important regions of the image, motivating our targeted sim fidelity metrics and further work on effective evaluation of robotic world models.

}

\end{document}